\newtheorem{theorem}{Theorem}
\newtheorem{corollary}{Corollary}
\newtheorem{lemma}{Lemma}
\newtheorem{remark}{Remark}
\numberwithin{equation}{section}
\DeclareMathOperator{\argmin}{argmin}
\DeclareMathOperator{\argmax}{argmax}
\DeclareMathOperator{\conv}{conv}
\DeclareMathOperator{\rank}{rank}
\title{ Hierarchical Clustering of Hyperspectral Images \\ using Rank-Two Nonnegative Matrix Factorization  }
\author{
Nicolas Gillis\thanks{Department of Mathematics and Operational Research, Facult\'e Polytechnique, Universit\'e de Mons, Rue de Houdain 9, B-7000 Mons, Email: nicolas.gillis@umons.ac.be. This work was carried on when  
NG was a postdoctoral researcher of the fonds de la recherche scientifique (F.R.S.-FNRS).} 
 \and Da Kuang\thanks{School of Computational Science and Engineering, Georgia Institute of Technology, Atlanta, GA 30332-0765, USA. Emails: $\{$da.kuang,hpark$\}$@cc.gatech.edu. The work of these authors was supported in part by the National Science Foundation (NSF) grants CCF-0808863 and CCF-0732318.} \and  Haesun Park$^{\dagger}$
} 
\date{}
\begin{document}
\renewcommand{\labelitemi}{$\diamond$}

\maketitle

\begin{abstract}

In this paper, we design a hierarchical clustering algorithm for high-resolution hyperspectral images. 
At the core of the algorithm, a new rank-two nonnegative matrix factorizations (NMF) algorithm is used to split the clusters, which is motivated by convex geometry concepts. The method starts with a single cluster containing all pixels, and, at each step, (i)~selects a cluster in such a way that the error at the next step is minimized, and (ii)~splits the selected cluster into two disjoint clusters using rank-two NMF in such a way that the clusters are well balanced and stable. The proposed method can also be used as an endmember extraction algorithm in the presence of pure pixels.  The effectiveness of this approach is illustrated on several synthetic and real-world hyperspectral images, and shown to outperform standard clustering techniques such as k-means, spherical k-means and standard NMF. 

\end{abstract}

\textbf{Keywords.} nonnegative matrix factorization, rank-two approximation, convex geometry, high-resolution hyperspectral images, hierarchical clustering, endmember extraction algorithm.

\section{Introduction} \label{intro}

A hyperspectral image (HSI) is a set of images taken at many different wavelengths (usually between 100 and 200), not just the usual three visible bands of light (red at 650nm, green at 550nm, and blue at 450nm). An important problem in hyperspectral imaging is blind hyperspectral unmixing (blind~HU): given a HSI, the goal is to recover the constitutive materials present in the image (the \emph{endmembers})  and the corresponding abundance maps (that is, determine which pixel contains which endmember and in which quantity). Blind HU has many applications such as quality control in the food industry, analysis of the composition of chemical compositions and reactions, monitoring the development and health of crops, monitoring polluting sources, military surveillance, and medical imaging; see, e.g.,~\cite{Jose12} and the references therein. 

Let us associate a matrix $M \in \mathbb{R}^{m \times n}_+$ to a given HSI with $m$ spectral bands and $n$ pixels as follows: the $(i,j)$th entry $M(i,j)$ of matrix $M$ is the reflectance of the $j$th pixel at the $i$th wavelength (that is, the fraction of incident light that is reflected by the $i$th pixel at the $j$th wavelength). Hence each column of $M$ is equal to the spectral signature of a pixel 
while each row is a vectorized image at a given wavelength. 
The linear mixing model (LMM) assumes that the spectral signature of each pixel is a linear combination of the spectral signatures of the endmembers, where the weights in the linear combination are the abundances of each endmember in that pixel. For example, if a pixel contains 40\% of aluminum and 60\% of copper, then its spectral signature will be 0.4 times the spectral signature of the aluminum plus 0.6 times the spectral signature of the copper. This is a rather natural model: we assume that 40\% of the light is reflected by the aluminum while 60\% is by the copper, while non-linear effects are neglected (such as the light interacting with multiple materials before reflecting off, or atmospheric distortions). 

Assuming the image contains $r$ endmembers, and denoting $W(:,k) \in \mathbb{R}^m$ ($1 \leq k \leq r$) the spectral signatures of the endmembers, the LMM can be written as 
\[
M(:,j) = \sum_{k=1}^r W(:,k) H(k,j) \quad 1 \leq j \leq n, 
\]
where $H(k,j)$ is the abundance of the $k$th endmember in the $j$th pixel, hence $\sum_{k=1}^r H(k,j) = 1$ for all~$j$, which is referred to as the abundance sum-to-one constraint. 
Under the LMM and given a HSI~$M$, blind HU amounts to recovering the spectral signatures of the endmembers (matrix $W$) along with the abundances (matrix $H$). 
Since all matrices involved $M$, $W$ and $H$ are nonnegative, blind HU under the LMM is equivalent to nonnegative matrix factorization (NMF): Given a nonnegative matrix $M \in \mathbb{R}^{m \times n}_+$ and a factorization rank $r$, find two nonnegative matrices $W \in \mathbb{R}^{m \times r}_+$ and $H \in \mathbb{R}^{r \times n}_+$ such that $M \approx WH$. Unfortunately, NMF is NP-hard~\cite{V09} and highly ill-posed~\cite{G12}. 
Therefore, in practice, it is crucial to use the structure of the problem at hand to develop efficient numerical schemes for blind HU. This is usually achieved using additional constraints or regularization terms in the objective function, e.g., the sum-to-one constraint on the columns of $H$ (see above), sparsity of the abundance matrix $H$ (most pixels contain only a few endmembers), piecewise smoothness of the spectral signatures $W(:,k)$ ~\cite{JQ09}, and spatial information~\cite{ZKZ07} (that is, neighboring pixels are more likely to contain the same materials). 
Although these priors make the corresponding NMF problems more well-posed, the underlying optimization problems to be solved are still computationally difficult (and only local minimum are usually obtained). We refer the reader to the survey~\cite{Jose12} for more details about blind HU. 

In this paper, we make an additional assumption, namely that \emph{most pixels are dominated mostly by one endmember}, and our goal is to cluster the pixels accordingly. In fact, clustering the pixels of a HSI only makes sense for relatively high resolution images. 
For such images, it is often assumed that, for each endmember, there exists at least one pixel containing only that endmember, that is, for all $1 \leq k \leq r$ there exists $j$ such that $M(:,j) = W(:,k)$. 
This is the so-called \emph{pure-pixel assumption}. The pure-pixel assumption is equivalent to the separability assumption (see~\cite{GV12} and the references therein) which makes the corresponding NMF problem tractable, even in the presence of noise~\cite{AGKM11}. Hence, blind HU can be solved efficiently under the pure-pixel assumption. 
Mathematically, a matrix $M \in \mathbb{R}^{m \times n}$ is $r$-separable if it can be written as 
\[
M = WH = W[I_r, H'] \Pi,  
\]
where $W \in \mathbb{R}^{m \times r}$, $H' \geq 0$ and $\Pi$ is a permutation matrix. If $M$ is a HSI, we have, as before, that 
\begin{itemize}
 \item The number $r$ is the number of endmembers present in the HSI. 
 
 \item Each column of $W$ is the spectral signature of an endmember.
 
 \item Each column of $H$ is the abundance vector of a pixel. More precisely, the entry $H(i,j)$ is the abundance of the $i$th endmember in the $j$th pixel. 
 
 \end{itemize} 
Because the column of $H$ sum to one, each column of $M$ belongs to the convex hull of the columns of $W$, that is, 
$\conv(M) \subseteq \conv(W)$. The pure-pixel assumption requires that $\conv(M) = \conv(W)$, that is, that 
 the vertices of the convex hull of the columns of $M$ are the columns of $W$; see the top of Figure~\ref{illus2d} for an illustration in the rank-three case. 
 Hence, the separable NMF problem (or, equivalently, blind HU under the LMM and the pure-pixel assumption) reduces to identifying the vertices of the convex hull of the columns of $M$. However, in noisy settings, this problem becomes more difficult, and although some robust algorithms have been proposed recently (see, e.g.,~\cite{G13} and the references therein), they are typically rather sensitive to noise and outliers. \\

Motivated by the fact that in high-resolution HSI's, most pixels are mostly dominated by one endmember, we develop in this paper a practical and theoretically well-founded hierarchical clustering technique. 
Hierarchical clustering based on NMF has been shown to be faster than flat clustering and can often achieve similar or even better clustering quality \cite{KP13}. At the core of the algorithm is the use of rank-two NMF that splits a cluster into two disjoint clusters. We study the unique property of rank-two NMF as opposed to a higher-rank NMF. We also propose an efficient algorithm for rank-two NMF so that the overall problem of hierarchical clustering of HSI's can be efficiently solved. 

The paper is organized as follows. In Section~\ref{hr2nmf}, we describe our hierarchical clustering approach (see Algorithm~\ref{h2nmfpseudo} referred to as H2NMF). At each step, a cluster is selected (Section~\ref{choice}) and then split into two disjoint clusters (Section~\ref{split}). The splitting procedure has a rank-two NMF algorithm at its core which is described in Section~\ref{r2hsi} where we also provide some sufficient conditions under which the proposed algorithm recovers an optimal solution. In Section~\ref{cg}, we analyze the geometric properties of the hierarchical clustering. 
In Section~\ref{ne}, we show that it outperforms $k$-means, spherical $k$-means (either if they are used in a hierarchical manner, or directly on the full image) and standard NMF on synthetic and real-world HSI's, being more robust to noise, outliers and absence of pure pixels.  
We also show that it can be used as an endmember extraction algorithms and  outperforms vertex component analysis (VCA)~\cite{ND05} and the successive projection algorithm (SPA)~\cite{MC01}, two standard and widely used techniques.

\section{Hierarchical Clustering for HSI's using Rank-Two NMF} \label{hr2nmf}

 As mentioned in the introduction, for high-resolution HSI, one can assume that most pixels contain mostly one material. Hence, given a high-resolution HSI with $r$ endmembers, it makes sense to cluster the pixels into $r$ clusters, each cluster corresponding to one endmember. 
 Mathematically, given the HSI $M \in \mathbb{R}^{m \times n}_+$, 
we want to find $r$ disjoint clusters $\mathcal{K}_k \subset \{1,2,\dots n\}$ for $1 \leq k \leq r$ so that $\cup_{k = 1,2,\dots,r} \mathcal{K}_k = \{1,2,\dots n\}$ and so that all pixels in $\mathcal{K}_k$ are dominated by the same endmember. 

In this paper, we assume the number of endmembers is known in advance. 
In fact, the problem of determining the number of endmembers (also known as model order selection) is nontrivial and out of the scope of this paper; see, e.g., \cite{BN05}. However, a crucial advantage of our approach is that it decomposes the data hierarchically and hence provides the user with a hierarchy of materials (see, e.g., Figures~\ref{clushier} and \ref{clushiersd}). 
In particular, the algorithm does not need to be rerun from scratch if the number of clusters required by the user is modified.   \\

 
In this section, we propose an algorithm to cluster the pixels of a HSI in a hierarchical manner. 
More precisely, at each step, given the current set of clusters $\{ \mathcal{K}_k \}_{k=1}^p$, we select one of the clusters and split it into two disjoint clusters. 
Hierarchical clustering is a standard technique in data mining that organizes a data set into a tree structure of items. It is widely used in text analysis for efficient browsing and retrieval~\cite{cluto, KP13, imagesearch}, as well as exploratory genomic study for grouping genes participating in the same pathway~\cite{genome}. Another example is to segment an image into a hierarchy of regions according to different cues in computer vision such as contours and textures~\cite{bsds500}. In contrast to image segmentation problems, our focus is to obtain a hierarchy of materials from HSI's taken at hundreds of wavelengths instead of the three visible wavelengths.

At each step of a hierarchical clustering technique, one has to address the following two questions: 
\begin{enumerate} 

\item Which cluster should be split next? 

\item How do we split the selected cluster? 

\end{enumerate} 
 These two building blocks for our hierarchical clustering technique for HSI's are described in the following sections.

\subsection{Selecting the Leaf Node to Split} \label{choice}

Eventually, we want to cluster the pixels into $r$ disjoint clusters $\{ \mathcal{K}_k \}_{k=1}^r$, each corresponding to a different endmember. Therefore, each submatrix $M(:,\mathcal{K}_k)$ should be close to a rank-one matrix since for all $j \in \mathcal{K}_k$, we should have $M(:,j) \approx W(:,k)$, possibly up to a scaling factor (e.g., due to different illumination conditions in the image), 
where $W(:,k)$ is the spectral signature of the endmember corresponding to the cluster $\mathcal{K}_k$. 
In particular, in ideal conditions, that is, each pixel contains exactly one material and no noise is present, 
$M(:,\mathcal{K}_k)$ is a rank-one matrix. 
Based on this observation, we define the error $E_k$ corresponding to each cluster as follows 
\[ 
E_k 
\quad = \quad 
\min_{X, \rank(X) = 1} ||M(:,\mathcal{K}_k) - X||_F^2 
\quad = \quad 
 ||M(:,\mathcal{K}_k)||_F^2 - \sigma_1^2(M(:,\mathcal{K}_k)). 
\]
We also define the total error $E = \sum_{k=1}^r E_k$. 
If we decide to split the $k$th cluster $\mathcal{K}_k$ into $\mathcal{K}_k^1$ and $\mathcal{K}_k^2$, the error corresponding to the columns in  $\mathcal{K}_k$ is given by 
\begin{align*}
\left( ||M(:,\mathcal{K}_k^1)||_F^2 - \sigma_1^2(M(:,\mathcal{K}_k^1)) \right) & + \left( ||M(:,\mathcal{K}_k^2)||_F^2 - \sigma_1^2(M(:,\mathcal{K}_k^2))  \right)
  \\
& = \left( ||M(:,\mathcal{K}_k^1)||_F^2 +  ||M(:,\mathcal{K}_k^2)||_F^2 ) \right) - \left( \sigma_1^2(M(:,\mathcal{K}_k^1))  + \sigma_1^2(M(:,\mathcal{K}_k^2))  \right) \\
& = ||M(:,\mathcal{K}_k)||_F^2 - \left( \sigma_1^2(M(:,\mathcal{K}_k^1))  + \sigma_1^2(M(:,\mathcal{K}_k^2))  \right). 
\end{align*}
(Note that the error corresponding to the other clusters is unchanged.) 
Hence, if the $k$th cluster is split, the total error $E$ will be reduced by $\sigma_1^2(M(:,\mathcal{K}_k^1))  + \sigma_1^2(M(:,\mathcal{K}_k^2) - \sigma_1^2(M(:,\mathcal{K}_k))$. Therefore, we propose to split the cluster $k$ for which $\sigma_1^2(M(:,\mathcal{K}_k^1))  + \sigma_1^2(M(:,\mathcal{K}_k^2) - \sigma_1^2(M(:,\mathcal{K}_k))$ is maximized: this leads to the largest possible decrease in the total error $E$ at each step.



\subsection{Splitting a Leaf Node} \label{split}

For the splitting procedure, we propose to use rank-two NMF. Given a nonnegative matrix $M \in \mathbb{R}^{m \times n}_+$, rank-two NMF looks for two nonnegative matrices $W \in \mathbb{R}^{m \times 2}_+$ and $H \in \mathbb{R}^{2 \times n}_+$ such that $WH \approx M$.  
The motivation for this choice is two-fold: 
\begin{itemize}
\item NMF corresponds to the linear mixing model for HSI's (see Introduction), and 
\item Rank-two NMF can be solved efficiently, avoiding the use of an iterative procedure as in standard NMF algorithms. 
In Section~\ref{r2hsi}, we propose a new rank-two NMF algorithm using convex geometry concepts from HSI; see Algorithm~\ref{rank2nmf}. 
\end{itemize} 
Suppose for now we are given a rank-two NMF $(W,H)$ of $M$. Such a factorization is a two-dimensional representation of the data; more precisely, it projects the columns of $M$ onto a two-dimensional pointed cone generated by the columns of $W$. 
 Hence, a naive strategy to cluster the columns of $M$ is to choosing the clusters as follows
 \[
C_1 =  \{ \ i \ | \ H(1,i) \geq H(2,i) \ \} \quad \text{ and } \quad C_2 = \{ \ i \ | \ H(1,i) < H(2,i) \ \}. 
\] 
Defining the vector $x \in [0,1]^n$ as   
 \[
x(i)  = \frac{H(1,i)}{H(1,i)+H(2,i)} \quad \text{ for } \quad 1 \leq i \leq n, 
\] 
the above clustering assignment is equivalent to taking 
 \begin{equation} \label{c1c2}
C_1 =  \{ \ i \ | \ {x}_i \geq \delta \ \} \quad \text{ and } \quad C_2 = \{ \ i \ | \ {x}_i < \delta \ \}, 
 \end{equation}
with $\delta = 0.5$. However, the choice of $\delta = 0.5$ is by no means optimal, and often leads to a rather poor separation. In particular, if an endmember is located exactly between the two extracted endmembers, the corresponding cluster is likely to be divided into two which is not desirable (see Figure~\ref{illus2d}). 
In this section, we present a simple way to tune the threshold $\delta \in [0,1]$ in order to obtain, in general, significantly better clusters $C_1$ and $C_2$.   

Let us define the empirical cumulative distribution of $x$ as follows 
\[
\hat{F}_X(\delta) = \frac{1}{n} \ \big| \{ \ i \ | \ {x}_i \leq \delta \ \} \big|  \in [0,1], \quad \text{ for } \delta \in [0,1].   
\] 
By construction, $\hat{F}_X(0) = 0$ and $\hat{F}_X(1) = 1$.  Let us also define 
\[
\hat{G}_X(\delta) = \frac{1}{n (\bar{\delta}-\underline{\delta})} \ 
\big| 
\{ \ i \ | \  \underline{\delta} =  \max(0,\delta - \hat{\delta}) 
\leq {x}_i \leq 
\min(1,\delta + \hat{\delta}) = \bar{\delta} \ \} \big|  \in [0,1],  
\] 
for $\delta \in [0,1]$, and $\hat{\delta} \in (0,0.5)$ is a small parameter. The function $\hat{G}_X(\delta)$ accounts for the number of points in a small interval around $\delta$. Note that, assuming uniform distribution in the interval $[0,1]$, the  expected value of $\hat{G}_X(\delta)$ is equal to one. In fact, since the entries of $x$ are in the interval $[0,1]$, the expected number of data points in an interval of length $L$ is $n L$. 
In this work, we use $\hat{\delta} = 0.05$.  



Given $\delta$, we obtain two clusters $C_1$ and $C_2$; see Equation~\eqref{c1c2}. We propose to choose a value of $\delta$ such that 
\begin{enumerate} 

\item The clusters are balanced, that is, the two clusters contain, if possible, roughly the same number of elements. Mathematically, we would like that $\hat{F}_X(\delta) \approx 0.5$. 

\item The clustering is stable, that is, if the value of $\delta$ is slightly modified,  then only a few points are transfered from one cluster to the other. 
Mathematically, we would like that $\hat{G}_X(\delta) \approx 0$. 

\end{enumerate} 
We propose to balance these two goals by choosing $\delta$ that minimizes the following criterion: 
\begin{equation} \label{gdelta}
 g(\delta) =  
 \underbrace{- \log \left( \hat{F}_X(\delta) \left(  1-\hat{F}_X(\delta) \right) \right)}_{\text{balanced clusters}} 
 + \underbrace{\exp \left( \hat{G}_X(\delta) \right)}_{\text{stable clusters}}. 
\end{equation}

 The first term avoids skewed classes, while the second promotes a stable clustering. 
Note that the two terms are somewhat well-balanced since, for $\hat{F}_X(\delta) \in [0.1,0.9]$, 
\[
- \log \left( \hat{F}_X(\delta) \left(  1-\hat{F}_X(\delta) \right) \right) \leq 2.5, 
 \] 
 and the expected value of $\hat{G}_X(\delta)$ is one (see above). Note that depending on the application at hand, the two terms of $g(\delta)$ can be balanced in different ways; for example, if one wants to allow very small clusters to be extracted, then the first term of $g(\delta)$ should be given less importance.  

\begin{remark}[Sensitivity to $\delta$] The splitting procedure is clearly very sensitive to the choice of $\delta$. 
For example, as described above, choosing $\delta = 0.5$ can give very poor results. 
However, if the function $g(\delta)$ is chosen in a sensible way, then the corresponding splitting procedure generates in general good clusters.  For example, we had first run all the experiments from Section~\ref{ne} selecting $\delta$ minimizing the function
\[
g(\delta) =  4 \left(  \hat{F}_X(\delta)-0.5 \right)^2 + \left( \hat{G}_X(\delta) \right)^2, 
\]
and it gave very similar results (sometimes slightly better, sometimes slightly worse). 
The advantage of the function~\eqref{gdelta} is that it makes sure no empty cluster is generated 
(since it goes to infinity when $\hat{F}_X(\delta)$ goes to 0 or 1). 
\end{remark}
\begin{remark}[Sensitivity to $\hat{\delta}$] 
The parameter $\hat{\delta}$ is the window size where the stability of a given clustering is evaluated. 
For $\delta$ corresponding to a stable cluster 
(that is, only a few pixels are transferred from one cluster to the other if $\delta$ is slightly modified), 
$\hat{G}_X(\delta)$ will remain small when $\hat{\delta}$ is slightly modified. 
For the considered data sets, most clusterings are stable (because the data is in fact constituted of several clusters of points) hence in that case the splitting procedure does not seem to be very sensitive to $\hat{\delta}$ as long as it is in a reasonable range. In fact, we have also run the numerical experiments for $\hat{\delta} =0.01$ and $\hat{\delta} =0.1$ and it gave very similar results 
(in particular, for the Urban, San Diego, Terrain and Cuprite HSI's from Section~\ref{realworld}, it is hardly possible to distinguish the solutions with the naked eye). 
\end{remark}

Figure~\ref{illus2d} illustrates the geometric insight behind the splitting procedure in the case $r=3$ (see also Section~\ref{cg}),  while Algorithm~\ref{h2nmfpseudo} gives a pseudo-code of the full hierarchical procedure. 
\begin{figure}[ht!] 
\centering 
   \includegraphics[width=8.5cm]{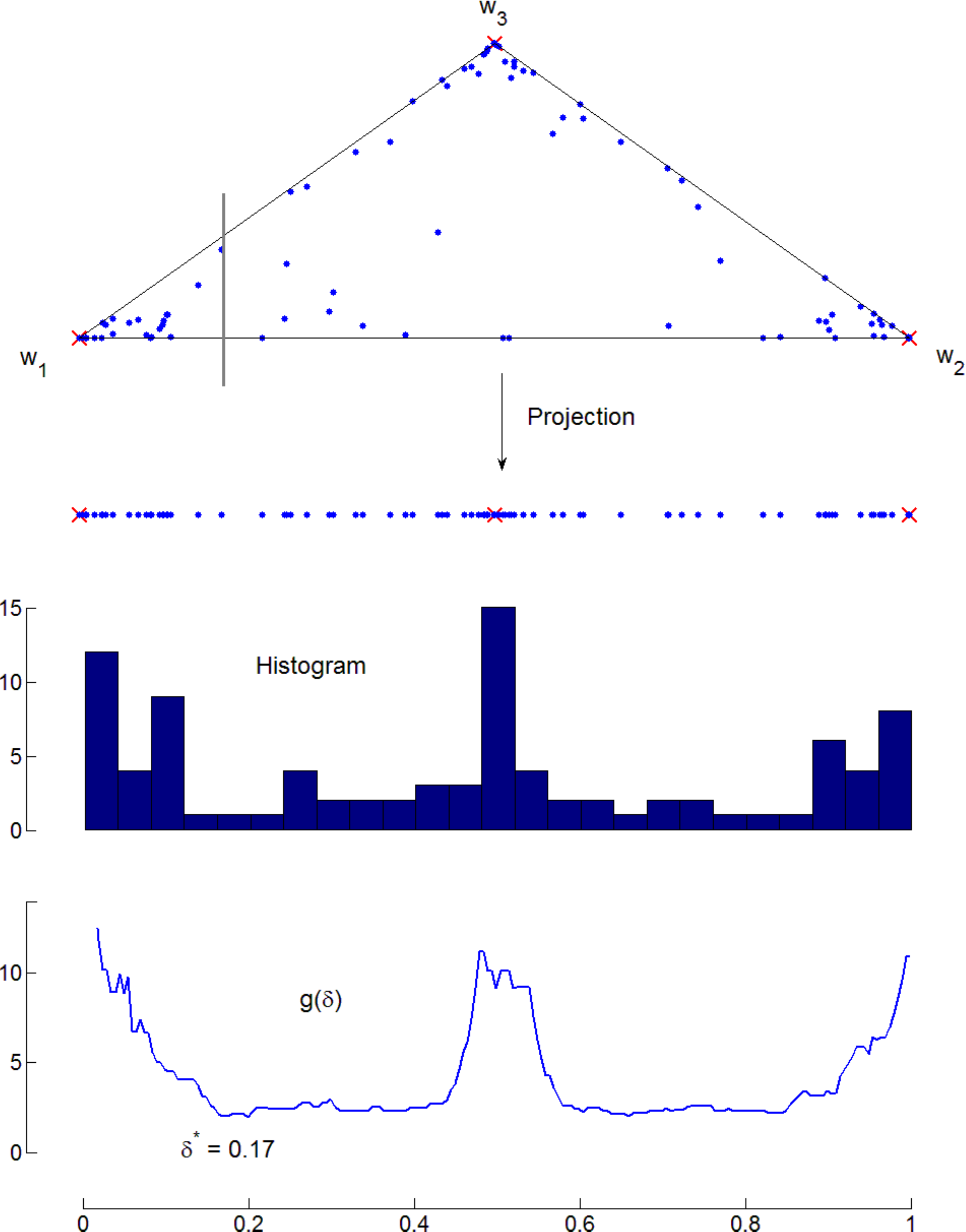} 
\caption{Illustration of the splitting technique based on rank-two NMF.}
\label{illus2d}
\end{figure}

\algsetup{indent=2em}
\begin{algorithm}[ht!]
\caption{Hierachical Clustering of a HSI based on Rank-Two NMF (H2NMF) \label{h2nmfpseudo}}
\begin{algorithmic}[1] 
\REQUIRE A HSI $M \in \mathbb{R}^{m \times n}_+$ and the number $r$ of clusters to generate. 
\ENSURE Set of disjoint clusters $\mathcal{K}_i$ for $1 \leq i \leq r$ with $\cup_i \mathcal{K}_i = \{1,2,\dots,n\}$. 
    \medskip 
		
\STATE \emph{\% Initialization} 
\STATE $\mathcal{K}_1 = \{1,2,\dots,n\}$ and $\mathcal{K}_i = \emptyset$ for $2 \leq i \leq r$. 
\STATE $\left( \mathcal{K}_{1}^1, \mathcal{K}_{1}^2 \right)$ = splitting($M, \mathcal{K}_{1}$). \emph{\% See Algorithm~\ref{splitalgo} and Section~\ref{split}}
\STATE $\mathcal{K}_i^1 = \mathcal{K}_i^2 = \emptyset$ for $2 \leq i \leq r$. 
\FOR {$k = 2$ : $r$}
		
		\STATE \emph{\% Select the cluster to split; see Section~\ref{choice}}
		\STATE Let $j = \argmax_{i=1,2,\dots r} 
						\sigma_1^2(M(:,\mathcal{K}_i^1))  + \sigma_1^2(M(:,\mathcal{K}_i^2) - \sigma_1^2(M(:,\mathcal{K}_i))$. 

		\STATE \emph{\% Update the clustering}
		\STATE $\mathcal{K}_j = \mathcal{K}_j^1$ and $\mathcal{K}_{k} = \mathcal{K}_j^2$. 
		
			\STATE \emph{\% Split the new clusters (Algorithm~\ref{splitalgo})}  
			\STATE $\left( \mathcal{K}_j^1, \mathcal{K}_j^2 \right)$ = splitting($M, \mathcal{K}_j$) 
			and 
			$\left( \mathcal{K}_k^1, \mathcal{K}_k^2 \right)$ = splitting($M, \mathcal{K}_k$).  
			
\ENDFOR

\end{algorithmic}
\end{algorithm}

\algsetup{indent=2em}
\begin{algorithm}[ht!]
\caption{Splitting of a HSI using Rank-Two NMF \label{splitalgo}}
\begin{algorithmic}[1] 
\REQUIRE A HSI $M \in \mathbb{R}^{m \times n}_+$ and a subset $\mathcal{K} \subseteq \{1,2,\dots,n\}$. 
\ENSURE Set of two disjoint clusters $\mathcal{K}^1$ and $\mathcal{K}^2$  with $\mathcal{K}_1 \cup \mathcal{K}_2= \mathcal{K}$. 
    \medskip 
		
\STATE Let $(W,H)$ be the rank-two NMF of $M(:,\mathcal{K})$ computed by Algorithm~\ref{rank2nmf}.  

\STATE Let $x(i)  = \frac{H(1,i)}{H(1,i)+H(2,i)}$  for  $1 \leq i \leq |\mathcal{K}|$. 

\STATE Compute $\delta^*$ as the minimum of $g(\delta)$ defined in \eqref{gdelta}. 

\STATE $\mathcal{K}^1 = \{ \  \mathcal{K}(i) \ | \ x(i) \geq \delta^* \ \}$ 
and 
$\mathcal{K}^2 = \{ \ \mathcal{K}(i) \ | \ x(i) < \delta^* \ \}$.  

\end{algorithmic}
\end{algorithm}

\subsection{Rank-Two NMF for HSI's} \label{r2hsi}

In this section, we propose a simple and fast algorithm for the rank-two NMF problem tailored for HSI's (Section~\ref{desalgo}). 
Then, we discuss some sufficient conditions for the algorithm to be optimal (Section~\ref{tg}).

\subsubsection{Description of the Algorithm} \label{desalgo}

When a nonnegative matrix $M \in \mathbb{R}^{m \times n}_+$ has rank two, Thomas has shown~\cite{Tho} that finding two nonnegative matrices $(W,H) \in \mathbb{R}^{m \times 2}_+ \times \mathbb{R}^{2 \times n}_+$ such that $M = WH$ is always possible (see also~\cite{CR93}). 
    This can be 
    explained geometrically as follows: viewing columns of $M$ as points in $\mathbb{R}^m_+$, the fact that $M$ has rank two implies that the set of its columns belongs to a two-dimensional subspace. Furthermore, because these columns are nonnegative, they belong to a two-dimensional pointed cone, see Figure~\ref{rank2}. Since such a cone is always spanned by two extreme vectors, this implies that all columns of $M$ can be represented exactly as nonnegative linear combinations of two nonnegative vectors, and therefore the exact NMF is always possible\footnote{The reason why this property no longer holds for higher values of the rank $r$ of matrix $M$ is that a $r$-dimensional cone is not necessarily spanned by a set of $r$ vectors when $r > 2$.} for $r=2$.
\begin{figure}[ht!]
\begin{center}
\includegraphics[width=6cm]{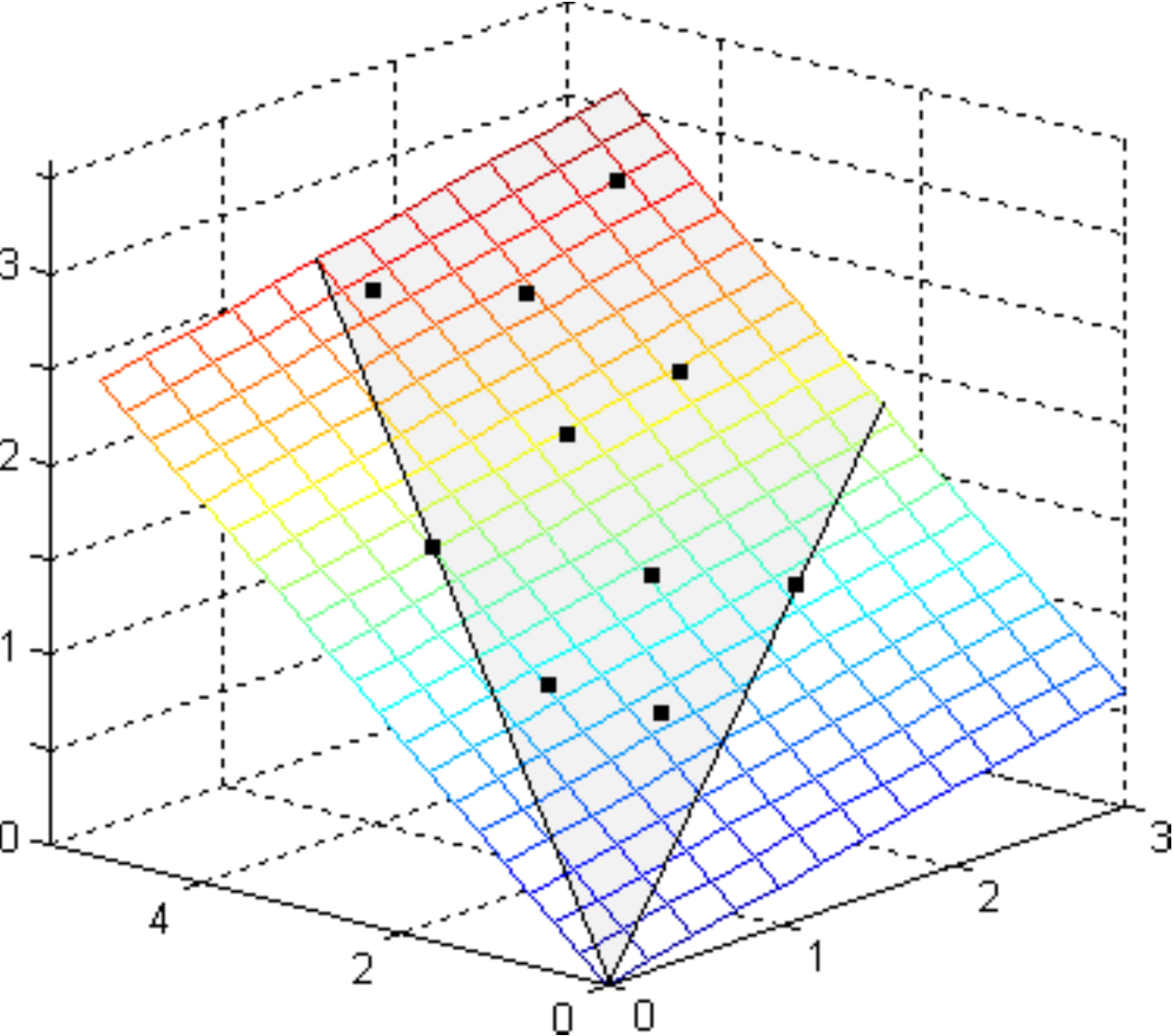}
\caption{Illustration of exact NMF for a rank-two $3$-by-$10$ nonnegative matrix \cite[p.24]{G11}.}
\label{rank2}
\end{center}
\end{figure}
    Moreover, these two extreme columns can easily be identified.  For example, if the columns of $M$ are normalized so that their entries sum to one, then the columns of $M$ belong to a line segment and it is easy to detect the two vertices. This can be done for example using any endmember extraction algorithm under the linear mixing model and the pure-pixel assumption since they aim to 
		detect the vertices (corresponding to the endmembers) of a convex hull of a set of points (see Introduction). 
		In this paper, we use the successive projection algorithm  (SPA)~\cite{MC01} which is a highly efficient and widely used algorithm; see Algorithm~\ref{spa}. 
		\algsetup{indent=2em}
\begin{algorithm}[ht!]
\caption{Successive Projection Algorithm (SPA) 
\label{spa}~\cite{MC01, GV12}}
\begin{algorithmic}[1] 
\REQUIRE Separable matrix $M = W[I_r, H'] \Pi$ where $H' \geq 0$, the sum of the entries of each column of $H'$ is smaller than one, $W$ is full rank and $\Pi$ is a permutation, and the number $r$ of columns to be extracted. 
\ENSURE Set of indices $K$ such that $M(:,K) = W$ (up to permutation). 
    \medskip 
\STATE Let $R = M$, 
$K = \{\}$. 
\FOR {$i = 1$ $:$ $r$}   
\STATE $k = \argmax_j ||R_{:j}||_2$.  
\STATE $R \leftarrow \left(I-\frac{R_{:k} R_{:k}^T}{||R_{:k}||_2^2}\right)R$. \vspace{0.1cm} 
\STATE $K = K \cup \{k\}$. 
\ENDFOR
\end{algorithmic}
\end{algorithm} 
 Moreover, it has been shown to be robust to any small perturbation of the input matrix~\cite{GV12}. Note that SPA is closely related to the automatic target generation process algorithm (ATGP)~\cite{RC03} and the successive volume maximization algorithm (SVMAX)~\cite{CM11}; see~\cite{Ma13} for a survey about these methods. 
Note that it would be possible to use more sophisticated endmember extraction algorithms for this step, e.g., RAVMAX~\cite{AC10} or WAVMAX~\cite{CM11} which are more robust variants of SPA (although computationally much more expensive). 

We can now describe our proposed rank-two NMF algorithm for HSI: It first projects the columns of $M$ into a two-dimensional linear space using the SVD (note that if the rank of input matrix is two, this projection step is exact), then identifies two important columns with SPA and projects them onto the nonnegative orthant, and finally computes the optimal weights solving a nonnegative least squares problem (NNLS); see Algorithm~\ref{rank2nmf}. 
\algsetup{indent=2em}
\begin{algorithm}[!h]
\caption{Rank-Two NMF for HSI's \label{rank2nmf}}
\begin{algorithmic}[1] 
\REQUIRE A nonnegative matrix $M \in \mathbb{R}^{m \times n}_+$. 
\ENSURE A rank-two NMF $(W,H) \in \mathbb{R}^{m \times 2}_+ \times \mathbb{R}^{2 \times n}_+$. 
\STATE \% \emph{Compute an optimal rank-two approximation of $M$ }
\STATE $[U,S,V^T] =$ svds$(M,2)$; \% \emph{See the Matlab function \texttt{svds} }
\STATE Let $X = S V$ $(= U^T U S V = U^TM)$;
\STATE \% \emph{Extract two indices using SPA}
\STATE $K =$ SPA$(X,2)$; \% \emph{See Algorithm~\ref{spa}}
\STATE $W = \max\big(0,USV(:,K) \big)$; 
\STATE $H = \argmin_{Y \geq 0} ||M-WY||_F^2$; \% \emph{See Algorithm~\ref{nnls2}  }
\end{algorithmic} 
\end{algorithm} 
\algsetup{indent=2em}
\begin{algorithm}[ht!]
\caption{Nonnegative Least Squares with Two Variables \label{nnls2}~\cite{KP13}}
\begin{algorithmic}[1] 
\REQUIRE A matrix $A \in \mathbb{R}^{m \times 2}$ and a vector $b \in \mathbb{R}^{m}$. 
\ENSURE A solution $x \in \mathbb{R}^{2}_+$ to $\min_{x \geq 0} ||Ax - b||_2$. 
    \medskip 
\STATE \% \emph{Compute the solution of the unconstrained least squares problem }
\STATE $x = \argmin_x ||Ax - b||_2$ (e.g., solve the normal equations $(A^T A) x = A^T b$). 
\IF{$x \geq 0$}   
\STATE return. 
\ELSE 
\STATE \% \emph{Compute the solutions for $x(1) = 0$ and $x(2) = 0$ (the two possible active sets)} \vspace{0.1cm}
\STATE Let $y = \left(0, \max\left(0, \frac{A(:,1)^T b}{||A(:,1)||_2^2}\right) \right)$ 
and $z = \left(  \max\left(0, \frac{A(:,2)^T b}{||A(:,2)||_2^2}\right), 0\right)$. \vspace{0.1cm} 
 \IF {$||Ay - b||_2 < ||Az - b||_2$} 
 \STATE $x = y$.
 \ELSE 
\STATE $x = z$. 
\ENDIF 
\ENDIF 
\end{algorithmic}
\end{algorithm}

Let us analyze the computational cost of Algorithm~\ref{rank2nmf}. 
The computation of the rank-two SVD of $M$ is $\mathcal{O}(mn)$ operations~\cite{GV96}. 
(Note that this operation scales well for sparse matrices as there exist SVD methods that can handle large sparse matrices, e.g., the \texttt{svds} function of Matlab.) 
For HSI's, $m$ is much smaller than $n$ (usually $m \sim 200$ while $n \sim 10^6$) hence it is faster to computing the SVD of $M$ using the SVD of $MM^T$ which requires $2mn + O(m^2)$ operations; see, e.g.,~\cite{ND05}. 
Note however that this is numerically less stable as the condition number of the corresponding problem is squared. 
Extracting the two indices in step 5 with SPA requires $\mathcal{O}(n)$ operations~\cite{GV12}, while computing the optimal $H$ requires solving $n$ linear systems in two variables for a total computational cost of $\mathcal{O}(mn)$ operations~\cite{KP13}. 
In fact, the NNLS  
\[
\min_{X \in \mathbb{R}^{2 \times n}_+} ||M-WX||_F^2
\]
where $W \in \mathbb{R}^{m \times 2}_+$ can be decoupled into $n$ independent NNLS in two variables since 
\[
||M-WX||_F^2 = \sum_{i=1}^n ||M(:,i) - WX(:,i)||_2^2. 
\] 
Algorithm~\ref{nnls2} implements the algorithm in~\cite{KP13} to solve these subproblems. 
 
Finally, Algorithm~\ref{rank2nmf} requires $\mathcal{O}(mn)$ operations which implies 
that the global hierarchical clustering procedure (Algorithm~\ref{h2nmfpseudo}) requires at most $\mathcal{O}(mnr)$ operations.  
Note that this is rather efficient and developing a significantly faster method would be difficult. 
In fact, it already requires $\mathcal{O}(mnr)$ operations to compute the product of $W \in \mathbb{R}^{m \times r}$ and $H \in \mathbb{R}^{r \times n}$, or to assign optimally $n$ data points in dimension $m$ to $r$ cluster centroids using the Euclidean distance. 
Note however that in an ideal case, if the largest cluster is always divided into two clusters containing the same number of pixels (hence we would have a perfectly balanced tree), the number of operations reduces to $\mathcal{O}(mn \log(r))$. Hence, in practice, if the clusters are well balanced, the computational cost is rather in $\mathcal{O}(mn \log(r))$ operations. 


\subsubsection{Theoretical Motivations} \label{tg} 

An mentioned above, rank-two NMF can be solved exactly for rank-two input matrices. 
Let us show that Algorithm~\ref{rank2nmf} does. 
\begin{theorem} \label{optr2}
If $M$ is a rank-two nonnegative matrix whose entries of each column sum to one, 
then Algorithm~\ref{rank2nmf} computes an optimal rank-two NMF of $M$. 
\end{theorem}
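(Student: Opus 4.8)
The plan is to show that, when $\rank(M)=2$ and the columns of $M$ sum to one, every operation in Algorithm~\ref{rank2nmf} is exact, so that it returns a pair $(W,H)$ with $M=WH$; optimality is then immediate since $\|M-W'H'\|_F^2\ge 0$ for every feasible $(W',H')$. First I would collect the straightforward facts. Because $\rank(M)=2$, \texttt{svds}$(M,2)$ returns an \emph{exact} decomposition $M=USV^T$ with $U^TU=V^TV=I_2$; hence $X:=U^TM=SV^T$, and $M(:,i)=UX(:,i)$ with $U$ acting as a linear isometry on $\col(U)\supseteq\col(M)$, so column norms and inner products are the same for the columns of $M$ and of $X$. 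Consequently SPA applied to $X$ behaves exactly as if applied to the columns of $M$, and moreover $(USV^T)(:,K)=M(:,K)\ge 0$, so the projection $\max(0,\cdot)$ in Step~6 does nothing and $W=M(:,K)$ is made of two genuine columns of $M$.

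The geometric core is to prove that SPA$(X,2)$ extracts the two extreme columns. Writing $e$ for the all-ones vector, the sum-to-one hypothesis gives $e^TM(:,i)=1$, hence $(U^Te)^TX(:,i)=1$ for every $i$; since $U^Te\neq 0$ (otherwise $e\perp\col(M)$), the columns of $X$ are collinear in $\mathbb{R}^2$. The convex hull of finitely many collinear points is a segment $[a,b]$ whose endpoints $a,b$ are themselves columns of $X$, with $a\neq b$ because $\rank(M)=2$. Parametrize $X(:,i)=(1-\lambda_i)a+\lambda_i b$ with $\lambda_i\in[0,1]$. In SPA's first iteration, $\lambda\mapsto\|(1-\lambda)a+\lambda b\|_2^2$ is a strictly convex quadratic, so its maximum on $[0,1]$ is attained at an endpoint, forcing the column of largest norm to be $a$ or $b$, say $a$. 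After the deflation $R=\big(I-\frac{aa^T}{\|a\|_2^2}\big)X$, each column becomes $R(:,i)=\lambda_i\big(I-\frac{aa^T}{\|a\|_2^2}\big)b$, whose norm is maximized at $\lambda_i=1$, i.e.\ at $b$; here one checks $\big(I-\frac{aa^T}{\|a\|_2^2}\big)b\neq 0$, since $b\in\mathrm{span}(a)$ together with $(U^Te)^Ta=(U^Te)^Tb=1$ would force $a=b$. Hence $K$ indexes $a$ and $b$.

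Then I would close the loop. With $W=M(:,K)=[\,Ua,\;Ub\,]$, every column satisfies $M(:,i)=UX(:,i)=(1-\lambda_i)(Ua)+\lambda_i(Ub)=Wh_i$ with $h_i=(1-\lambda_i,\lambda_i)^T\ge 0$, so $M=WH^\star$ for some $H^\star\in\mathbb{R}^{2\times n}_+$. Thus $\min_{Y\ge 0}\|M-WY\|_F^2=0$; since Algorithm~\ref{nnls2} solves each two-variable column subproblem exactly, Step~7 returns an $H$ with $\|M-WH\|_F=0$. As $\|M-W'H'\|_F^2\ge 0$ for all $W'\in\mathbb{R}^{m\times 2}_+$ and $H'\in\mathbb{R}^{2\times n}_+$, the pair $(W,H)$ is an optimal rank-two NMF of $M$.

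I expect the main obstacle to be the SPA step — specifically, making the endpoint-extraction argument watertight: dealing with ties in the $\argmax$ (both endpoints equidistant from the origin, or several data columns coinciding with an endpoint), and cleanly ruling out the degenerate configurations ($a=b$, or a deflated column vanishing) using the $\rank(M)=2$ and sum-to-one hypotheses. Once these normalizations are in place, exactness of the truncated SVD, redundancy of the nonnegativity projection, and exactness of the two-variable NNLS solver of Algorithm~\ref{nnls2} are all routine.
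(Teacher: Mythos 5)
Your proof is correct, and its geometric skeleton is the same as the paper's: the rank-two SVD is exact, so after projecting by $U^T$ the data columns lie on a line segment, SPA picks out the two endpoints (which are genuine columns of $M$, hence nonnegative, so the $\max(0,\cdot)$ projection is vacuous), every column is a convex combination of them, and the exact two-variable NNLS then gives zero residual, which is trivially optimal. Where you diverge is in how the two key ingredients are obtained. The paper derives the segment structure by invoking Thomas's result that a rank-two nonnegative matrix admits an exact NMF $M=FG$, normalizing the columns of $F$ (and rows of $G$) to sum to one, and concluding that the columns of $M$ lie in $[F'(:,1),F'(:,2)]$; you instead get collinearity directly from the sum-to-one hypothesis, via $(U^Te)^TX(:,i)=1$ with $U^Te\neq 0$, which is arguably cleaner and makes clear that no a priori existence result for exact rank-two NMF is needed -- your argument constructs one. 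Second, the paper simply cites the SPA correctness theorem of Gillis--Vavasis (Th.~1 of \cite{GV12}) to assert that the two extracted indices are the segment's vertices, whereas you reprove this special case from scratch (strict convexity of the squared norm along the segment for the first pick, the deflation formula $R(:,i)=\lambda_i(I-aa^T/\|a\|_2^2)b$ for the second, with the degenerate cases $a=b$ and $(I-aa^T/\|a\|_2^2)b=0$ ruled out by $\rank(M)=2$ and the affine constraint). The paper's route is shorter and plugs directly into the robustness theory of SPA (useful for the noisy corollaries that follow); yours is self-contained and slightly more careful about ties and degeneracies, at the cost of redoing a known argument. No gap either way.
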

\begin{proof}
Since $M$ has rank-two and is nonnegative, there exists an exact rank-two NMF $(F,G)$ of $M = FG = F(:,1)G(1,:) + X(:,2)Y(2,:)$~\cite{Tho}. 
Moreover, since the entries of each column of $M$ sum to one, we can assume without loss of generality that the entries of the each column of $F$ and $G$ sum to one as well. In fact, we can normalize the two columns of $F$ so that their entries sum to one while scaling the rows of $G$ accordingly: 
\[
M = 
\underbrace{ \frac{F(:,1)}{||F(:,1)||_1} }_{F'(:,1)} 
\underbrace{||F(:,1)||_1 G(1,:)}_{G'(1,:)}
+ \underbrace{\frac{ F(:,2) }{ ||F(:,2)||_1 } }_{F'(:,2)}
\underbrace{||F(:,2)||_1 G(2,:)}_{G'(2,:)} = F' G'. 
\] 
Since the entries of each column of $M$ and $F'$ sum to one and $M=F'G'$, the entries of each column of $G'$ have to sum to one as well. Hence, the columns of $M$ belong to the line segment $[F'(:,1),F'(:,2)]$. 

Let $(U,S,V^T)$ be the rank-two SVD of $M$ computed at step~2 of Algorithm~\ref{rank2nmf}, we have 
$SV = U^T M = (U^T F') G'$. Hence, the columns of $SV$ belong to the line segment \mbox{$[U^TF'(:,1)$},\mbox{$U^TF'(:,2)]$} so that SPA applied on $SV$ will identify two indices corresponding to two columns of $M$ being the vertices of the line segment defined by its columns~\cite[Th.1]{GV12}. 
Therefore, any column of $M$ can be reconstructed with a convex combination of these two extracted columns and Algorithm~\ref{rank2nmf} will generate an exact rank-two NMF of~$M$. 
\end{proof}

		\begin{corollary}
		Let $M$ be a noiseless HSI with two endmembers satisfying the linear mixing model and the sum-to-one constraint, 
		then Algorithm~\ref{rank2nmf} computes an optimal rank-two NMF of $M$. 
		\end{corollary}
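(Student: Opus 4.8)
The plan is to deduce the corollary directly from Theorem~\ref{optr2} by checking that a noiseless two-endmember HSI obeying the linear mixing model and the sum-to-one constraint satisfies the two hypotheses of that theorem, namely that $M$ is a rank-two nonnegative matrix and that the entries of each of its columns sum to one.

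First I would write out the linear mixing model explicitly: with two endmembers, $M = WH$ where $W \in \mathbb{R}^{m \times 2}_+$ collects the two endmember spectral signatures and $H \in \mathbb{R}^{2 \times n}_+$ collects the abundances, with $\mathbf{1}^T H = \mathbf{1}^T$ (the sum-to-one constraint). Both factors are nonnegative, and so is $M$. Since $W$ has only two columns, $\rank(M) \leq 2$. If the two endmembers are genuinely distinct --- which, once the columns of $W$ are $\ell_1$-normalized as below, means they are not scalar multiples of one another --- then $\rank(M) = 2$; the degenerate case $\rank(M) = 1$ (a single effective endmember) is handled trivially by Algorithm~\ref{rank2nmf} anyway, so nothing is lost.

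Next I would fix the normalization so that the column-sum hypothesis of Theorem~\ref{optr2} holds. Adopting the standard convention that the endmember spectra are scaled to have unit $\ell_1$-norm --- equivalently, the usual preprocessing of normalizing the columns of $M$ in this setting --- we have $\mathbf{1}^T W = \mathbf{1}^T$, and hence $\mathbf{1}^T M = \mathbf{1}^T W H = \mathbf{1}^T H = \mathbf{1}^T$, i.e., the entries of each column of $M$ sum to one. At this point both hypotheses of Theorem~\ref{optr2} are verified, and invoking it shows that Algorithm~\ref{rank2nmf} returns an optimal (in fact exact) rank-two NMF of $M$, which is precisely the claim.

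I do not expect any genuine obstacle: the corollary is an immediate specialization of Theorem~\ref{optr2}. The only point deserving a word of care is the bookkeeping between the two ``sum-to-one'' normalizations --- the one on the columns of $H$ coming from the LMM and the one on the columns of $M$ required by the theorem --- and ensuring the columns of $W$ are normalized consistently so that one implies the other. One might hope to sidestep this via a scale-invariance argument, but since the SPA subroutine inside Algorithm~\ref{rank2nmf} selects columns by decreasing $\ell_2$-norm and is therefore not invariant under rescaling the columns of its input, it is cleaner to fix the normalization at the outset as above.
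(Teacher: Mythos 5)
There is a genuine gap: your reduction needs the hypothesis of Theorem~\ref{optr2} that \emph{the columns of $M$ sum to one}, but that does not follow from the corollary's assumptions. In this paper the ``sum-to-one constraint'' is the abundance sum-to-one constraint, i.e.\ $\mathbf{1}^T H = \mathbf{1}^T$; nothing is assumed about the column sums of $W$ (reflectance spectra are not $\ell_1$-normalized in the LMM), hence nothing about the column sums of $M$. You notice this and repair it by ``adopting the convention'' $\mathbf{1}^T W = \mathbf{1}^T$ (equivalently, normalizing the columns of $M$), but that is an extra hypothesis, not a harmless reduction: as you yourself point out, SPA selects columns by largest $\ell_2$-norm, so Algorithm~\ref{rank2nmf} is not invariant under rescaling the columns of its input, and exactness of the algorithm run on the column-normalized matrix $MD^{-1}$ does not transfer to the algorithm run on $M$ itself. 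In effect you prove the corollary only under the additional assumption that the endmember spectra are $\ell_1$-normalized, which is a strictly weaker statement than the one claimed.

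The fix is the paper's route, which bypasses the column-sum condition on $M$ entirely: the abundance sum-to-one constraint already says that every column $M(:,j) = H(1,j)W(:,1) + H(2,j)W(:,2)$ with $H(\cdot,j)\ge 0$ summing to one, i.e.\ the columns of $M$ lie on the line segment $[W(:,1),W(:,2)]$. That is exactly the geometric property used in the \emph{second part} of the proof of Theorem~\ref{optr2}: since $\rank(M)\le 2$, the rank-two SVD is exact and $SV = U^T M$ has its columns on the segment $[U^T W(:,1), U^T W(:,2)]$; by \cite[Th.~1]{GV12}, SPA then extracts two columns that are the extreme points of the segment spanned by the data columns (no pure pixel or normalization needed), so every column of $M$ is a convex combination of the two extracted columns and the final NNLS step attains zero error. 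Your verification that $\rank(M)=2$ generically (with the rank-one case handled separately) is fine; the flawed step is only the attempt to force the ``columns of $M$ sum to one'' hypothesis rather than reusing the segment argument directly.
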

		\begin{proof}
By definition, $M = WH$ where the columns of $W$ are equal to the spectral signatures of the two endmembers and the columns of $H$ are nonnegative and sum to one (see Introduction). The rest of the proof follows from the second part of the proof of Theorem~\ref{optr2}.  (Note that the pure-pixel assumption is not necessary.) 
\end{proof}

In practice, the sum-to-one constraints assumption is sometimes relaxed to the following: the sum of the entries of each column of $H$ is at most one. This has several advantages such as allowing the image to containing `background' pixels with zero spectral signatures, or taking into account different intensities of light among the pixels in the image; see, e.g.,~\cite{Jose12}. In that case, Algorithm~\ref{rank2nmf} works under the additional pure-pixel assumption: 
\begin{corollary}
		Let $M$ be a noiseless HSI with different illumination conditions, with two endmembers, and satisfying the linear mixing model and the 
		pure-pixel assumption, then Algorithm~\ref{rank2nmf} computes an optimal rank-two NMF of $M$. 
		\end{corollary}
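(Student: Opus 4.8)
The plan is to reduce the statement to the exactness of SPA (Algorithm~\ref{spa}), along the same lines as the proof of Theorem~\ref{optr2}; the difference is that here the columns of $M$ cannot be renormalised to sum to one, so the separable structure that SPA needs will be produced instead by the pure-pixel assumption. Under the relaxed model introduced above, $M = WH$ with $W\in\mathbb{R}^{m\times 2}_+$ the endmember signatures and $H\ge 0$ whose columns have entry sums at most one (which also allows zero, i.e.\ `background', columns). Assuming the two endmembers are distinct, $W$ has full column rank, and the pure-pixel assumption provides, for $k=1,2$, an index $j_k$ with $M(:,j_k)=W(:,k)$, so that $M = W[I_2,H']\Pi$ for a permutation $\Pi$ and some $H'\ge 0$ with column sums at most one. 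In particular $W(:,1),W(:,2)\in\col(M)$, hence $\col(M)=\col(W)$ and $\rank(M)=2$; therefore the rank-two SVD $M=USV$ computed at step~2 is exact, $\col(U)=\col(M)$, and, writing $W=UA$, the $2\times 2$ matrix $A=U^TW$ is invertible.

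Next I would transfer this structure to the reduced matrix $X=SV=U^TM$ formed at step~3, on which SPA is run. From $M=W[I_2,H']\Pi$ we get $X=(U^TW)[I_2,H']\Pi=A[I_2,H']\Pi$, a separable matrix whose first factor $A$ has full rank and whose $H'$-columns have entry sums at most one. By the exactness of SPA on separable input (\cite[Th.1]{GV12}), step~5 returns a set $K$ of two indices with $X(:,K)=A$ up to column permutation, i.e.\ $U^TM(:,K)=U^TW$ up to permutation; since $U^T$ is injective on $\col(U)=\col(M)=\col(W)$, this gives $M(:,K)=W$ up to permutation. Hence at step~6, $USV(:,K)=UU^TM(:,K)=M(:,K)=W$ (up to permutation) is already nonnegative, the clipping $\max(0,\cdot)$ has no effect, and the matrix $W$ computed by Algorithm~\ref{rank2nmf} equals the true endmember matrix.

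Exact reconstruction is then immediate. The computed $W$ has full column rank and every column of $M$ lies in $\col(W)$, so each two-variable least-squares problem solved at step~7 has a unique unconstrained solution $x$ with $Wx=M(:,i)$; by uniqueness this $x$ equals the corresponding column of the (permuted) abundance matrix $H$, hence $x\ge 0$, so Algorithm~\ref{nnls2} returns it. Therefore $WH=M$ exactly, and $(W,H)$ is an optimal rank-two NMF of $M$.

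The step that really needs care is this reduction of hypotheses: one must check that passing from $M$ to $X=U^TM$ preserves separability together with full rank of the mixing factor, which is exactly where the assumption that the two endmember signatures are linearly independent (the genuine rank-two case) enters, and where the pure-pixel assumption is essential — without it the data points fill a two-dimensional region and the two columns selected by SPA need not conically span all of them, in contrast with the sum-to-one case of the previous corollary. The mild discrepancy between SPA's stated requirement (column sums of $H'$ strictly below one) and the relaxed model (sums at most one) is harmless in this noiseless setting and is in any case absorbed by the robustness of SPA.
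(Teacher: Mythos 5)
Your argument is correct and follows essentially the same route as the paper's proof: both transfer the separable structure $M=W[I_2,H']\Pi$ to the reduced matrix $U^TM$ (the paper phrases this geometrically as a triangle with the origin as a vertex, you phrase it algebraically as $A[I_2,H']\Pi$ with $A=U^TW$ invertible) and then invoke the exactness of SPA from \cite[Th.1]{GV12}. Your added verifications — that the clipping in step~6 is vacuous and that the two-variable NNLS recovers $H$ exactly because $W$ has full column rank — are details the paper leaves implicit, not a different method.
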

		\begin{proof}
		By assumption,  $M = W[I_2, H']\Pi$ where $H'$ is nonnegative and the entries of each column sum to at most one, 
		and $\Pi$ is a permutation. 
		This implies that the columns of $M$ are now in the triangle whose vertices are $W(:,1)$, $W(:,2)$ and the origin. Following the proof of Theorem~\ref{optr2}, after the SVD, the columns of $SV$ are in the triangle whose vertices are $U^TW(:,1)$, $U^TW(:,2)$ and the origin. 
		Hence SPA will identify correctly the indices corresponding to $W(:,1)$ and $W(:,2)$~\cite[Th.1]{GV12} so that any column of $M$ can be reconstructed using these two columns. 
\end{proof}

\vspace{0.2cm}

At the first steps of the hierarchical procedure, rank-two NMF maps the data points into a two-dimensional subspace. However, the input matrix does not have rank-two if it contains more than two endmembers.  
In the following, we derive some simple sufficient conditions to support the fact that the rank-two SVD of a nonnegative matrix is nonnegative (or at least has most of its entries nonnegative). 
Let us refer to an optimal rank-two approximation of a matrix $M$ as an optimal solution of 
 \begin{equation} \nonumber 
 \min_{A \in \mathbb{R}^{m \times n}} ||M-A||_F^2 \quad \text{ such that }  \quad \rank(A) = 2. 
 \end{equation}
 We will also refer to rank-two NMF as the following optimization problem 
 \begin{equation} \nonumber 
 \min_{U \in \mathbb{R}^{m \times 2}, V \in \mathbb{R}^{2 \times n}} ||M-UV||_F^2 
\quad 
\text{ such that }  
\quad 
U \geq 0 \text{ and } V \geq 0.  
 \end{equation}
  
\begin{lemma} \label{lem2} Let $M \in \mathbb{R}^{m \times n}_+$, $A \in \mathbb{R}^{m \times n}$ be an optimal rank-two approximation of $M$, and $R=M-A$ be the residual error. If 
\[
L = \min_{i,j}(M_{ij}) \geq \max_{i,j} R_{ij}, 
\]
then every entry of $A$ is nonnegative. 
\end{lemma}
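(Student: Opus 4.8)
The plan is to establish the conclusion entrywise, via a one-line bound that does not even use the rank-two structure. Fix an arbitrary pair of indices $(i,j)$. Since $R = M-A$ by definition of the residual, we have $A_{ij} = M_{ij} - R_{ij}$. Now bound the two pieces separately: $M_{ij} \geq \min_{k,l} M_{kl} = L$ by definition of $L$, and $R_{ij} \leq \max_{k,l} R_{kl}$ by definition of the maximum. Combining these,
\[
A_{ij} \;=\; M_{ij} - R_{ij} \;\geq\; L \,-\, \max_{k,l} R_{kl} \;\geq\; 0,
\]
where the last inequality is exactly the hypothesis $L \geq \max_{k,l} R_{kl}$. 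Since $(i,j)$ was arbitrary, every entry of $A$ is nonnegative, which is the claim.

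I expect no real obstacle here: the statement is just the observation that, entrywise, $A$ differs from the matrix $M$ (which is nonnegative and in fact bounded below by $L$) by at most $\max_{k,l} R_{kl}$, and this quantity is assumed to be $\le L$. It is worth flagging in the write-up what the argument does and does not use: it uses only $M \ge 0$ — more precisely that the smallest entry of $M$ dominates the largest residual entry — while optimality of $A$ and the constraint $\rank(A)=2$ play no role in the inequality itself. The point of the rank-two hypothesis is only to make the assumption $L \geq \max_{k,l} R_{kl}$ plausible: if $M$ is nonnegative with entries bounded away from $0$ and its best rank-two approximation is accurate, the residual entries are small, the hypothesis holds, and the optimal rank-two SVD factor $A$ is then automatically nonnegative (so Algorithm~\ref{rank2nmf} loses nothing in projecting onto the nonnegative orthant in that regime). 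A symmetric bound would control $M_{ij}-A_{ij}$ from above given a lower bound on $\min_{k,l} R_{kl}$, but that is not needed for the statement as given.
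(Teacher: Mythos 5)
Your proof is correct and is essentially the paper's argument in direct form: the paper argues by contradiction (if $A_{kl}<0$ then $L \leq M_{kl} < M_{kl}-A_{kl} = R_{kl} \leq \max_{i,j} R_{ij}$), whereas you rearrange the same entrywise identity $A_{ij} = M_{ij} - R_{ij}$ into a direct bound. Your added remark that optimality and the rank-two constraint play no role in the inequality itself is accurate and harmless.
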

\begin{proof}
If $A_{kl} < 0$ for some $(k,l)$, then $L \leq M_{kl} < M_{kl} - A_{kl} = R_{kl} \leq \max_{ij} R_{ij}$, a contradiction. 
\end{proof}

\begin{corollary} \label{cor2} Let $M \in \mathbb{R}^{m \times n}_+$ satisfy 
\[
L = \min_{i,j}(M_{ij}) \geq \sigma_3(M). 
\]
Then any optimal rank-two approximation of $M$ is nonnegative. 
\end{corollary}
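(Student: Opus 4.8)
The plan is to control the largest entry of the residual $R = M - A$ by the spectral norm $\|R\|_2$, identify $\|R\|_2$ with $\sigma_3(M)$ for any optimal rank-two approximation, and then invoke Lemma~\ref{lem2}.

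First I would recall that, by the Eckart--Young theorem, an optimal rank-two approximation $A$ satisfies $||M-A||_F^2 = \sum_{i \geq 3} \sigma_i^2(M)$. To pin down $\|R\|_2$ for \emph{every} such $A$ (not just the truncated SVD), I would apply the Weyl-type singular value inequality $\sigma_{i+j-1}(B+C) \leq \sigma_i(B) + \sigma_j(C)$ with $B = M-A$, $C = A$ and $j=3$; since $\rank(A) \leq 2$ we have $\sigma_3(A)=0$, which gives $\sigma_i(M-A) \geq \sigma_{i+2}(M)$ for all $i \geq 1$. Summing the squares of these inequalities yields $||M-A||_F^2 = \sum_{i\geq1}\sigma_i^2(M-A) \geq \sum_{i\geq1}\sigma_{i+2}^2(M) = \sum_{i\geq3}\sigma_i^2(M) = ||M-A||_F^2$, so equality holds term by term; in particular $\|R\|_2 = \sigma_1(M-A) = \sigma_3(M)$.

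Next, for any matrix $R$ and any indices $(i,j)$ one has $R_{ij} = e_i^T R e_j \leq ||e_i||_2 \, ||R||_2 \, ||e_j||_2 = ||R||_2$, hence $\max_{i,j} R_{ij} \leq \|R\|_2 = \sigma_3(M) \leq L = \min_{i,j} M_{ij}$ by hypothesis. This is precisely the assumption of Lemma~\ref{lem2}, so every entry of $A$ is nonnegative, which is the claim.

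There is no genuine obstacle here; the only point needing a little care is that the statement asks for \emph{any} optimal rank-two approximation, so one should not merely quote that the residual of the truncated SVD has spectral norm $\sigma_3(M)$, but verify (as in the forced-equality argument above) that every Frobenius-optimal rank-two approximation has residual of spectral norm exactly $\sigma_3(M)$. The remaining ingredients are a one-line entrywise bound and a direct application of the preceding lemma.
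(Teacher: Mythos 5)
Your proposal is correct and follows essentially the same route as the paper: bound $\max_{i,j} R_{ij}$ by $\|R\|_2 = \sigma_3(M)$ and invoke Lemma~\ref{lem2}. The only difference is that you carefully justify, via the Weyl-type inequality and a forced-equality argument, that $\|R\|_2 = \sigma_3(M)$ holds for \emph{every} Frobenius-optimal rank-two approximation, a point the paper simply asserts without proof.
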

\begin{proof}
This follows from Lemma  \ref{lem2} since, for any optimal rank-two approximation $A$ of $M$ with $R = M-A$, we have 
$\max_{ij} R_{ij} \leq ||R||_2 = \sigma_3(M)$. 
\end{proof}
Corollary~\ref{cor2} shows that a positive matrix close to having rank two and/or only containing relatively large entries is likely to have an optimal rank-two approximation which is nonnegative. Note that HSI's usually have mostly positive entries and, in fact, we have observed that the best rank-two approximation of real-world HSI's typically contains mostly nonnegative entries 
(e.g., 
for the Urban HSI more than 99.5\%, 
for the San Diego HSI more than 99.9\%,  
for the Cuprite HSI more than 99.98\%, and for the Terrain HSI more than 99.8\%; 
see Section~\ref{realworld} for a description of these data sets). 
It would be interesting to investigate further sufficient and necessary conditions for the optimal rank-two approximations of a nonnegative matrix to be nonnegative; this is a topic for further research. 
Note also Theorem~\ref{optr2} only holds for rank-two NMF and cannot be extended to more general cases with an arbitrary $r$. 
Consequently, we designed Algorithm~\ref{rank2nmf} specifically for rank-two NMF. However, Algorithm~\ref{rank2nmf} is important in the context of hierarchical clustering where rank-two NMF is the core computation. 
We will show in Section~\ref{ne} that our overall method achieves high efficiency compared to other hyperspectral unmixing methods. Moreover, if we flatten the obtained tree structure and look at the clusters corresponding to the leaf nodes, we will see that H2NMF achieves much better cluster quality compared to the flat clustering methods including $k$-means and spherical $k$-means. Thus, though the theory in this paper is developed for rank-two NMF only, it has great significance in clustering hyperspectral images with more than two endmembers.


\subsection{Geometric Interpretation of the Splitting Procedure} \label{cg}

Given a HSI $M \in \mathbb{R}^{m \times n}$ containing $r$ endmembers, and given that the pure-pixel assumption holds, we have that 
\[
M = WH = W[I_r, H'] \Pi,  
\]
where $W \in \mathbb{R}^{m \times r}$, $H' \geq 0$ and $\Pi$ is a permutation matrix. This implies that the convex hull $\conv(M)$ of the columns of $M$ coincides with the convex hull of the columns of $W$ and has $r$ vertices; see Introduction. 
A well-known fact in convex geometry is that the projection of any polytope $P$ into an affine subspace generates another polytope, say $P'$. Moreover, each vertex of $P'$ results from the projection of at least one vertex of $P$ (although it is unlikely, it may happen that two vertices are projected onto the same vertex, given that the projection is parallel to the segment joining these two vertices). 
It is interesting to notice that this fact has been used previously in hyperspectral imaging: for example, the widely used VCA algorithm~\cite{ND05} uses three kinds of projections: First, it projects the data into a $r$-dimensional space using the SVD (in order to reduce the noise). 
Then, at each step,  
\begin{itemize}
\item In order to identify a vertex (that is, an endmember), VCA projects $\conv(M)$ onto a one-dimensional subspace. More precisely, it randomly generates a vector $c \in \mathbb{R}^{m}$ and then selects the columns of $M$ maximizing $c^TM(:,i)$. 

\item It projects all columns of $M$ onto the orthogonal complement of the extracted vertex so that, if $W$ is full rank (that is, if $\conv(M)$ has $r$ vertices and has dimension $r-1$), the projection of $\conv(M)$ has $r-1$ vertices and has dimension $r-2$ (this step is the same as step 4 of SPA; see Algorithm~\ref{spa}). 
\end{itemize}


In view of these observations, Algorithm~\ref{rank2nmf} can be geometrically interpreted as follows:  
\begin{itemize}
\item At the first step, the data points are projected into a two-dimensional subspace so that the maximum variance is preserved. 

\item At the second step, two vertices are extracted by SPA. 

\item At the third step, the data points are projected onto the two-dimensional convex cone generated by these two vertices. 

\end{itemize}

\subsection{Related Work} \label{rz} 

It has to be noted that the use of rank-two NMF as a subroutine to solve classification problems has already been studied before.  
In~\cite{LSVH13}, a hierarchical NMF algorithm was proposed (namely, hierarchical NMF) based on rank-two NMF, and was used to identify tumor tissues 
in magnetic resonance spectroscopy images of the brain. 
The rank-two NMF subproblems were solved via standard iterative NMF techniques. In~\cite{hchnmf}, a hierarchical approach was proposed for convex-hull NMF, that could discover clusters not corresponding to any vertex of the $\text{conv}(M)$ but lying inside $\text{conv}(M)$, and an algorithm based on FastMap~\cite{fastmap} was used. 
In~\cite{KP13}, hierarchical clustering based on rank-two NMF was used for document classification. The rank-two subproblems were solved using alternating nonnegative least squares~\cite{KP07, KP11}, that is, by optimizing alternatively $W$ for $H$ fixed, and $H$ for $W$ fixed (the subproblems being efficiently solved using Algorithm~\ref{nnls2}). \\

However, these methods do not take advantage of the nice properties of rank-two NMF, and the novelty of our technique is threefold:  
\begin{itemize}

\item The way the next cluster to be split is chosen based on a greedy approach (so that the largest possible decrease in the error is obtained  at each step); see Section~\ref{choice}.

\item The way the clusters are split based on a trade off between having balanced clusters and stable clusters; see Section~\ref{split}.

\item The use of a rank-two NMF technique tailored for HSI's (using their convex geometry properties) to design a splitting procedure; see Section~\ref{r2hsi}.

\end{itemize}

\section{Numerical Experiments} \label{ne}

In the first part, we compare different algorithms on synthetic data sets: this allows us to highlight their differences and also shows that our hierarchical clustering approach based on rank-two NMF is rather robust to noise and outliers. 
In the second part, we apply our technique to real-world hyperspectral data sets. This in turn shows the power of our rank-two NMF approach for clustering, but also as a robust hyperspectral unmixing algorithm for HSI.  
The Matlab code is available at \url{https://sites.google.com/site/nicolasgillis/}.  
All tests are preformed using Matlab on a laptop Intel CORE i5-3210M CPU @2.5GHz 2.5GHz 6Go RAM.

\subsection{Tested Algorithms} 

We will compare the following algorithms: 
\begin{enumerate}
\item \textbf{H2NMF}: hierarchical clustering based on rank-two NMF; see Algorithm~\ref{h2nmfpseudo} and Section~\ref{hr2nmf}. 

\item \textbf{HKM}: hierarchical clustering based on $k$-means. This is exactly the same algorithm as H2NMF except that the clusters are split using $k$-means instead of the rank-two NMF based technique described in Section~\ref{split} (we used the \texttt{kmeans} function of Matlab). 

\item \textbf{HSPKM}: hierarchical clustering based on spherical $k$-means~\cite{BDGS03}. This is exactly the same algorithm as H2NMF except that the clusters are split using spherical $k$-means (we used a Matlab code available online\footnote{\url{http://www.mathworks.com/matlabcentral/fileexchange/28902-spherical-k-means/content/spkmeans.m}}). 

\item \textbf{NMF}: we compute a rank-$r$ NMF $(U,V)$ of the HSI $M$ using the accelerated HALS algorithm from~\cite{GG11}. Each pixel is assigned to the cluster corresponding to the largest entry of the columns of~$V$.  

\item \textbf{KM}: $k$-means algorithm with $k=r$.

\item \textbf{SPKM}: spherical $k$-means algorithm with $k=r$. 
\end{enumerate} 

Moreover, the cluster centroids of HKM and HSPKM are initialized the same way as for H2NMF, 
that is, using steps 2-5 of Algorithm~\ref{rank2nmf}. 
NMF, KM and SPKM are initialized in a similar way: the rank-$r$ SVD of $M$ is first computed (which reduces the noise) and then SPA is applied on the resulting low-rank approximation of $M$ (this is essentially equivalent to steps 2-5 of Algorithm~\ref{rank2nmf} but replacing 2 by $r$). 
Note that we have tried using random initializations for HKM, HSPKM, NMF, KM and SPKM (which is the default in Matlab) but the corresponding clustering results were very poor (for example, NMF, KM and SPKM were in general not able to identifying the clusters perfectly in noiseless conditions).  Recall that SPA is optimal for HSI's satisfying the pure-pixel assumption~\cite{GV12} hence it is a reasonable initialization.

\subsection{Synthetic Data Sets}  \label{sd}

In this section, we compare the six algorithms described in the previous section on synthetic data sets, so that the ground truth labels are known. 
Given the parameters $\epsilon \geq 0$, $s \in \{0,1\}$ and $b \in \{0,1\}$, the synthetic HSI $M = [WH, \; Z] + N$ with $W \in \mathbb{R}^{m \times r}_+$, $H \in \mathbb{R}^{r \times (n-z)}_+$, $Z \in \mathbb{R}^{m \times z}_+$ and $N \in \mathbb{R}^{m \times n}$ is generated as follows: 
\begin{itemize}

\item We use six endmembers, that is, $r=6$. 

\item The spectral signatures of the six endmembers, that is, the columns of $W$, are taken as the spectral signatures of materials from the Cuprite HSI (see Section~\ref{cuprite}) and we have $W~\in~\mathbb{R}^{188 \times 6}_+$; see Figure~\ref{cup6}. 
Note that $W$ is rather poorly conditioned ($\kappa(W) = 91.5$) as the spectral signatures look very similar to one another.
\begin{figure}[ht] 
\centering 
\includegraphics[width=6cm]{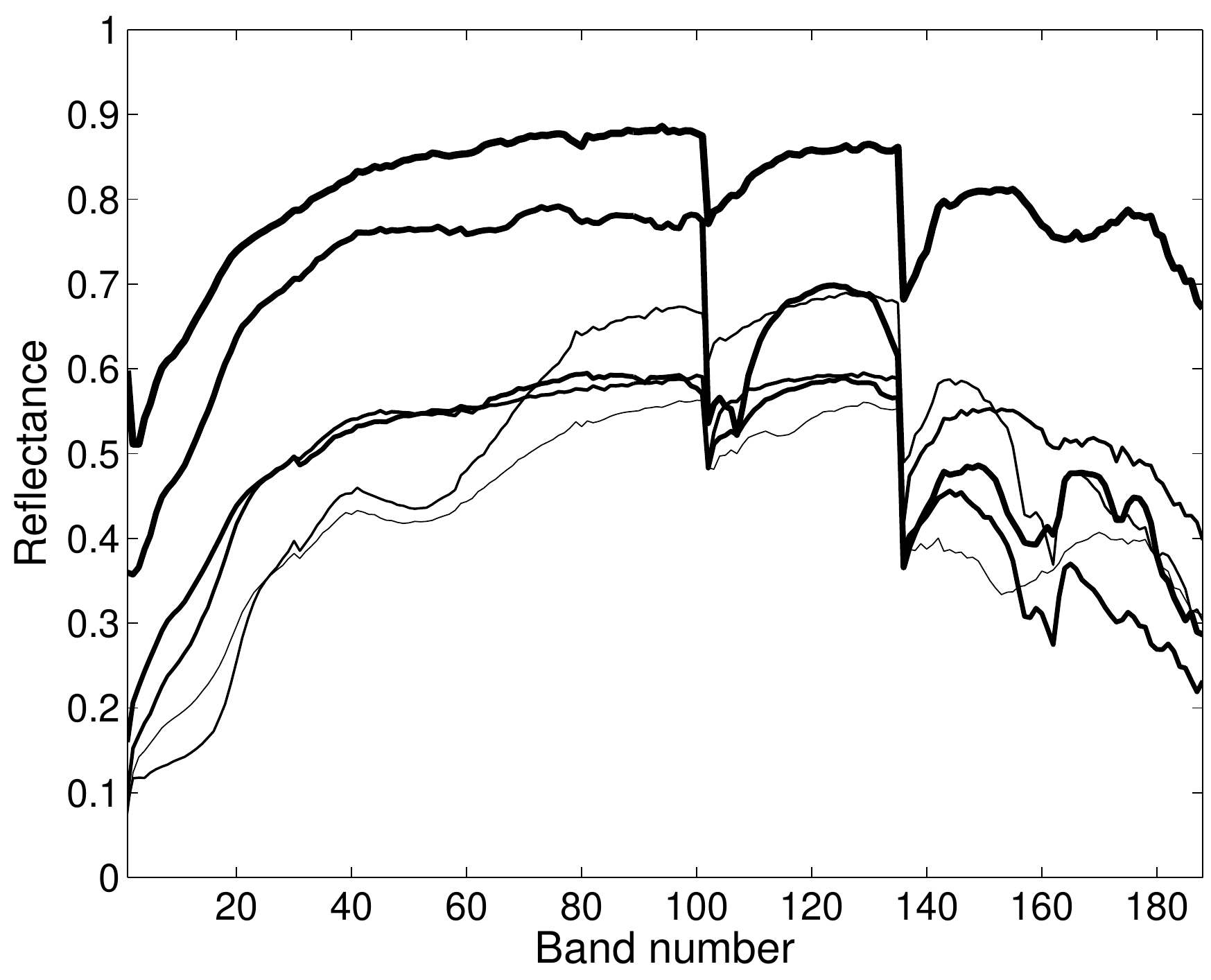}
\caption{Spectral signatures from the Cuprite HSI used for the synthetic data sets.}
\label{cup6}
\end{figure}

\item The pixels are assigned to the six clusters  $\mathcal{K}_k$ $1 \leq k \leq r$ where each cluster contains a different number of pixels with
$|\mathcal{K}_k| = 500-(k-1)50$, $1 \leq k \leq r$ (for a total of 2250 pixels). 


\item Once a pixel, say the $i$th, has been assigned to a cluster, say the $k$th, the corresponding column of $H$ is generated as follows: 
\[
H(:,i) = 0.9 \, e_k + 0.1 \, x , 
\]
where $e_k$ is the $k$th column of the identity matrix, and $x \in \mathbb{R}^{r}_+$ is drawn from a Dirichlet distribution where all parameters are equal to $0.1$. Note that the Dirichlet distribution generates a vector $x$ whose entries sum to one (hence the entries of $H(:,i)$ also do), while the weight of the entries of $x$ is concentrated only in a few components (hence each pixel usually contains only a few endmembers in large proportions). 
In particular, each pixel contains at least 90\% of a single endmember.

\item If $s=1$, each column of $H$ is multiplied by a constant drawn uniformly at random between 0.8 and 1. This allows us to take into account different illumination conditions in the HSI. Otherwise, if $s=0$, then $H$ is not modified.

\item If $b=1$, then ten outliers and forty background pixels with zero spectral signatures are added to $M$, that is, $Z = [z_1, \, z_2, \, \dots, \, z_{10}, \, 0_{m \times 40} ]$ where $0_{p \times q}$ is the $p$-by-$q$ all zero matrix. Each entry of an outlier $z_p \in \mathbb{R}^{m}_+$ ($1 \leq p \leq 10$) is drawn uniformly at random in the interval $[0,1]$ (using the \texttt{rand} function of Matlab), and then the $z_p$'s are scaled as follows:  
 \[
 z_p \leftarrow  K_W \, \frac{z_p}{||z_p||_2} \quad 1 \leq  p \leq 10, 
 \]
 where $K_W = \frac{1}{r} \sum_{k=1}^r ||W(:,k)||_2$ is the average of the norm of the columns of $W$. 
If $b=0$, no outliers nor background pixels with zero spectral signatures are added to $M$, that is, $Z$ is the empty matrix. 

\item The $j$th column of the noise matrix $N$ is generated as follows: each entry is generated following the normal distribution
    $N(i,j) \sim \mathcal{N}(0,1)$ for all $i$ (using the \texttt{randn} function of Matlab) and is then scaled as follows 
    \[
    N(:,j) \leftarrow \epsilon \, K_W \, u \, N(:,j), 
    \]
    where $\epsilon \geq 0$ is the parameter controlling the noise level, and $u$ is drawn uniformly at random between 0 and 1 (hence the columns are perturbed with different noise levels which is more realistic). 

\end{itemize}

Finally, the negative entries of $M = [WH, \, Z] + N$ are set to zero (note that this can only reduce the noise). \\


Once an algorithm was run on a data set and has generated $r$ clusters $\mathcal{K}'_k$ ($1 \leq k \leq r$), 
its performance is evaluated using the following criterion
\[
\text{Accuracy} = \max_{P \in [1,2,\dots,r]} \frac{1}{n} \left( \sum_{k=1}^r  | \mathcal{K}_k \cap \mathcal{K}'_{P(k)} | \right)  \; \in \;  [0,1], 
\]
where $[1,2,\dots,r]$ is the set of permutations of $\{1,2,\dots,r\}$, and $\mathcal{K}_k$ are the true clusters. 
Note that if a data point does not belong to any cluster (such as an outlier), it does not affect the accuracy. 
In other words, the accuracy can be equal to 1 even in the presence of outliers (as long as all other data points are properly clustered together).

\subsection{Results} 

For each noise level $\epsilon$ and each value of $s$ and $b$, we generate 25 
synthetic HSI's as described in Section~\ref{sd}. Figure~\ref{figsynt} reports the average accuracy; hence the higher the curve, the better. 
\begin{figure}[ht!] 
\centering
\begin{tabular}{cc}
\includegraphics[width=8.5cm]{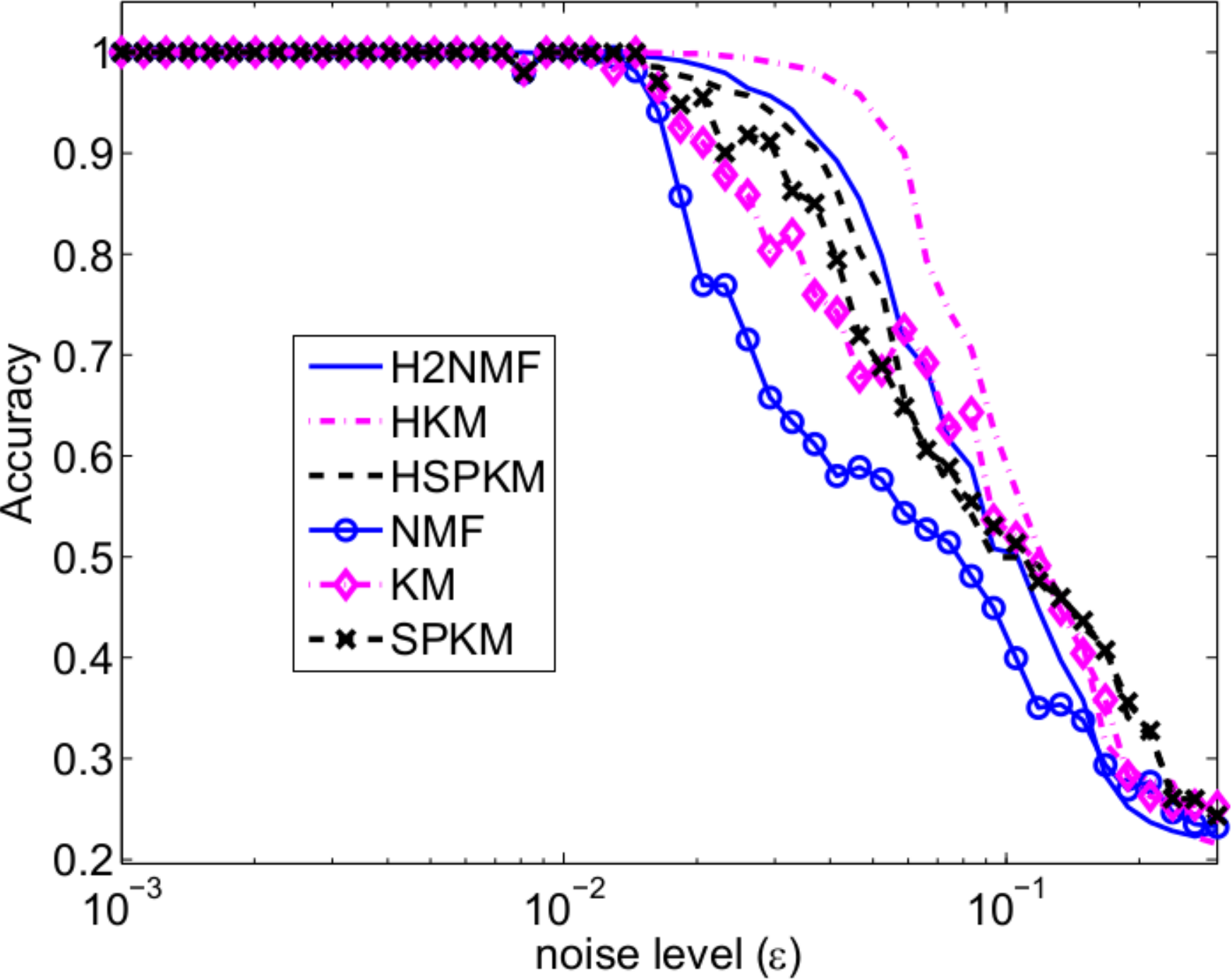} &
   \includegraphics[width=8.5cm]{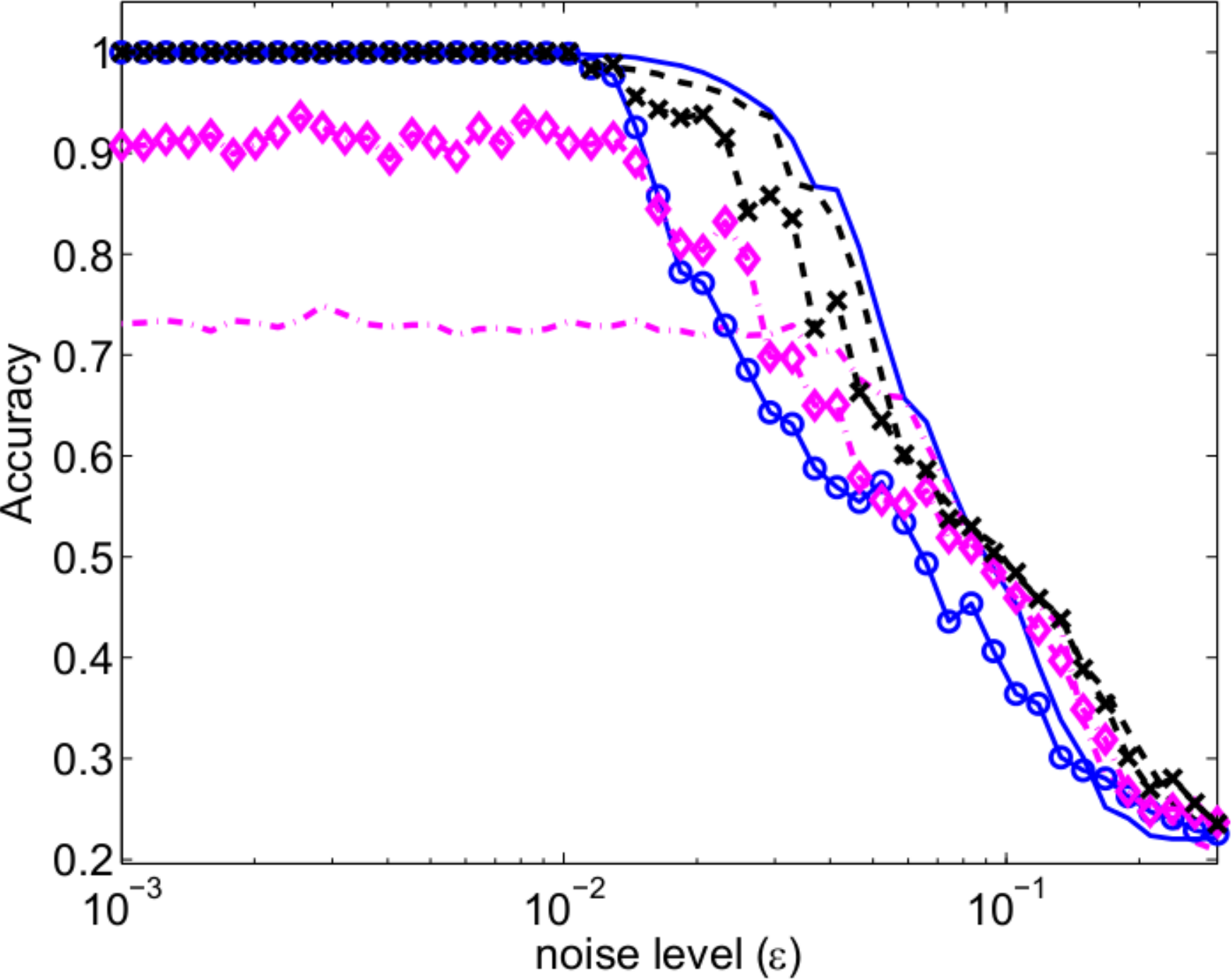}  \\
 \includegraphics[width=8.5cm]{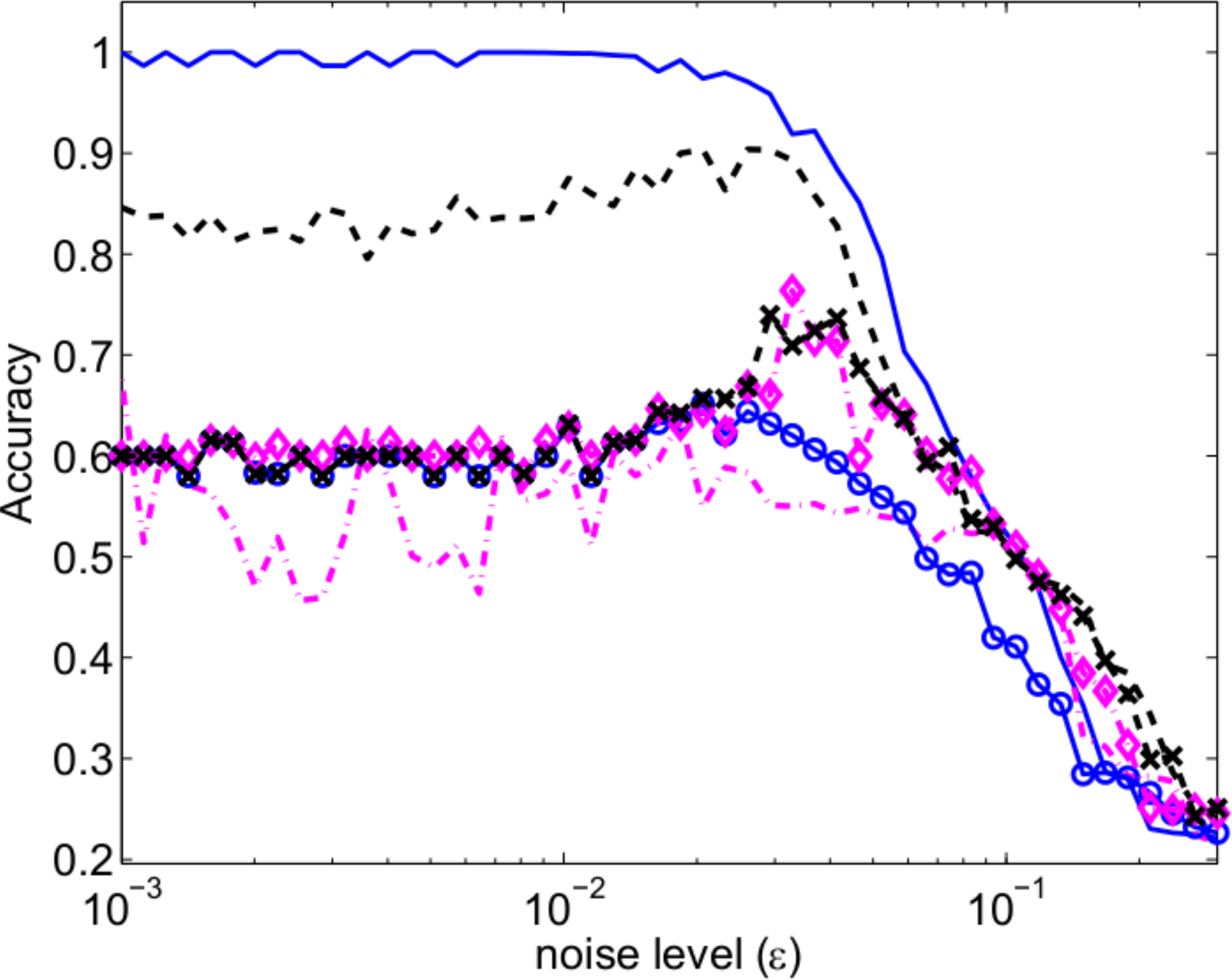}  & 
  \includegraphics[width=8.5cm]{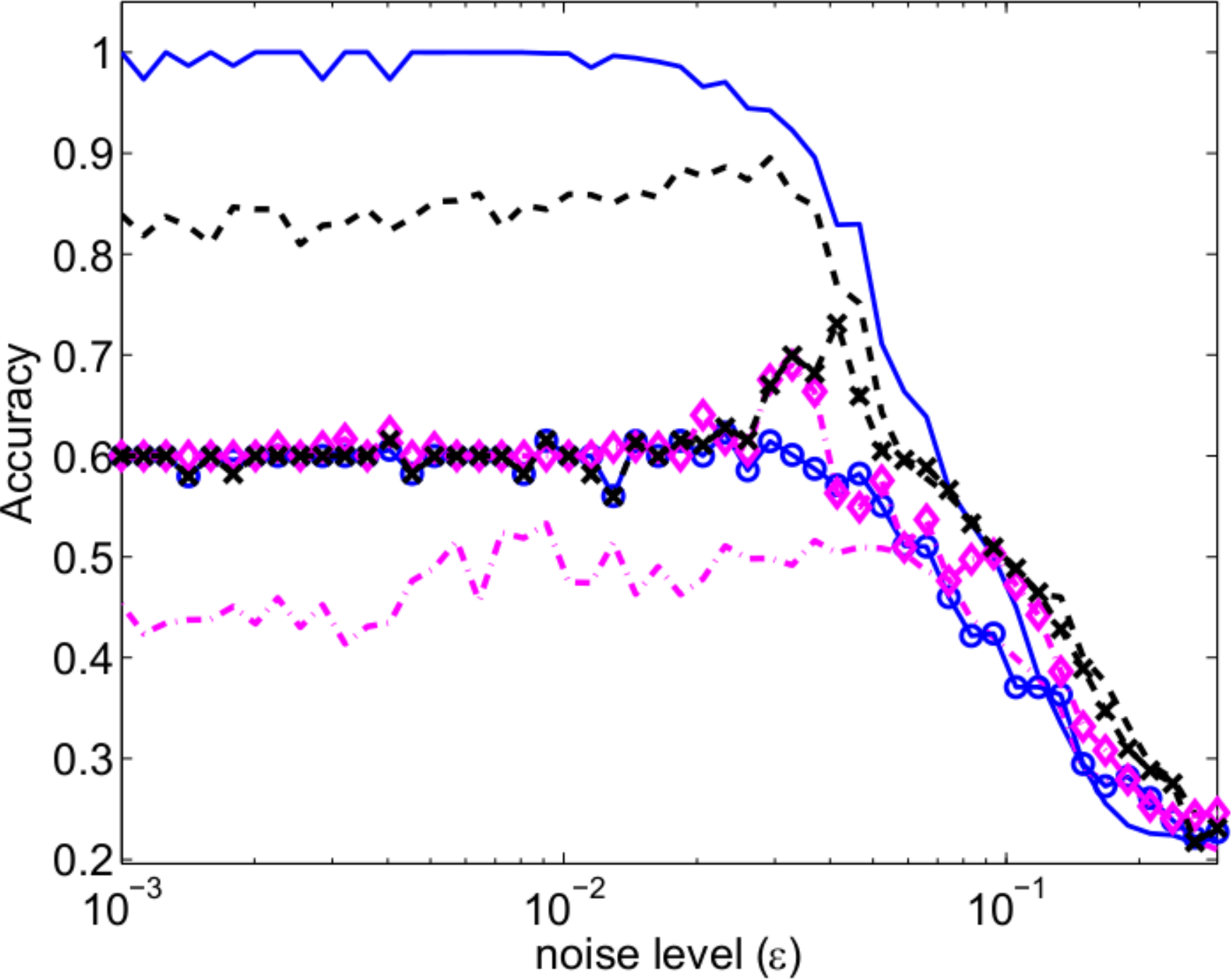} 
\end{tabular} 
\caption{Performance of the different algorithms on synthetic data sets. 
From top to bottom, left to right: $(s,b) = (0,0)$, (1,0), (0,1), and (1,1).}
\label{figsynt}
\end{figure} 

We observe that: 
\begin{itemize}

\item In almost all cases, the hierarchical clustering techniques consistently outperform the plain clustering approaches. 

\item Without scaling nor outliers (top left of Figure~\ref{figsynt}), 
HKM performs the best, while H2NMF is second best. 

\item With scaling but without outliers  (top right of Figure~\ref{figsynt}), H2NMF performs the best, slightly better than SPKM while HKM performs rather poorly. This shows that HKM is sensitive to scaling (that is, to different illumination conditions in the image), which will be confirmed on the  real-world HSI's.   

\item With outliers but without scaling  (bottom left of Figure~\ref{figsynt}), H2NMF outperforms all other algorithms. In particular, H2NMF has more than 95\% average accuracy for all $\epsilon \leq 0.3$. HSPKM behaves better than other algorithms but is not able to perfectly cluster the pixels, even for very small noise levels. 

\item With scaling and outliers (bottom right of Figure~\ref{figsynt}), HKM performs even worse. H2NMF still outperforms all other algorithms, while HSPKM extracts relatively good clusters compared to the other approaches. 

\end{itemize}

Table~\ref{timesynt} gives the average computational time (in seconds) of all algorithms for clustering a single synthetic data set. We observe that SPKM is significantly faster than all other algorithms while HKM is slightly slower. 
\begin{table}[ht!]
\begin{center}
\begin{tabular}{|c|c|c|c|c|c|}
\hline
 H2NMF &  HKM  & HSPKM & NMF    & KM   & SPKM  \\ \hline 
 1.68  &  2.77 &  1.78 &  2.25  & 3.73 &  0.19 \\
\hline
\end{tabular}
\caption{Average running time in seconds for the different algorithms on the synthetic data sets.}
\label{timesynt}
\end{center}
\end{table} 



\subsection{Real-World Hyperspectral Images}  \label{realworld}

In this section, we show that H2NMF is able to perform very good clustering of high resolution real-world HSI's. 
This section will focus on illustrating two important contributions: 
(i) H2NMF performs better than standard clustering techniques on real-world HSI, 
(ii) although H2NMF has been design to deal with HSI's with pixels dominated mostly by one endmember, 
it can provide meaningful and useful results in more difficult settings, and 
(iii)  H2NMF can be used as an endmember extraction algorithm in the presence of pure pixels (we compare it to vertex component analysis (VCA)~\cite{ND05} and the successive projection algorithm (SPA)~\cite{MC01}). 
Note that because the ground truth of these HSI's is not known precisely, 
it is difficult to provide an objective quantitative measure for the cluster quality.

\subsubsection{H2NMF as an Endmember Extraction Algorithm} \label{eea}

Once a set of clusters $\mathcal{K}_k$ ($1 \leq k \leq r$) has been identified by H2NMF (or any other clustering technique), each cluster of pixels should roughly correspond to a single material hence $M(:,\mathcal{K}_k)$ ($1 \leq k \leq r$) should be close to rank-one matrices. Therefore, as explained in Section~\ref{choice}, it makes sense to approximate these matrices with their best-rank one approximation: For $1 \leq k \leq r$, 
\[
M(:,\mathcal{K}_k) \approx u_k v_k^T, \quad \text{ where } u_k \in \mathbb{R}^m, v_k \in \mathbb{R}^n. 
\]
Note that, by the Perron-Frobenius and Eckart-Young theorems, $u_k$ and $v_k$ ($1 \leq k \leq r$) can be taken nonnegative since $M$ is nonnegative. 
Finally, $u_k$ should be close (up to a scaling factor) to the spectral signature of the endmember corresponding to the $k$th cluster. To extract a (good) pure pixel, a simple strategy is therefore to extract a pixel in each $\mathcal{K}_k$  whose spectral signature is the closest, with respect to some measure, to $u_k$. 
In this paper, we use the mean-removed spectral angle (MRSA) between $u_k$ and the pixels present in the corresponding cluster (see, e.g.,~\cite{cup11}). Given two spectral signatures, $x, y \in \mathbb{R}^m$, it is defined as 
\begin{equation} \label{mrsa}
\phi(x,y) 
= \frac{1}{\pi}
\arccos \left( \frac{ (x-\bar{x})^T (y-\bar{y}) }{||x-\bar{x}||_2 ||y-\bar{y}||_2} \right) \quad \in \quad [0,1],  
\end{equation}
where, for a vector $z \in \mathbb{R}^m$,  $\bar{z} = \left(\sum_{i=1}^m z_i\right) e$ and $e$ is the vector of all ones.

As we will see, this approach is rather effective for high-resolution images, and much more robust to noise and outliers than VCA and SPA. This will be illustrated later in this section. (It is important to keep in mind that SPA and VCA require the pure-pixel assumption while H2NMF requires that most pixels are dominated mostly by one endmember.)

\subsubsection{Urban HSI}

The Urban HSI\footnote{Available at \url{http://www.agc.army.mil/}.} from the HYper-spectral Digital Imagery Collection Experiment (HYDICE) contains 162 clean spectral bands, and the data cube has dimension $307 \times 307 \times 162$. The Urban data set is a rather simple and well understood data set: it is mainly composed of 6 types of materials (road, dirt, trees, roof, grass and metal) as reported in~\cite{GWO09}; see Figure~\ref{urband} 
and Figure~\ref{i2hnmf}. 
\begin{figure}[ht!] 
\begin{center}
\begin{tabular}{cc}
\includegraphics[height=6cm]{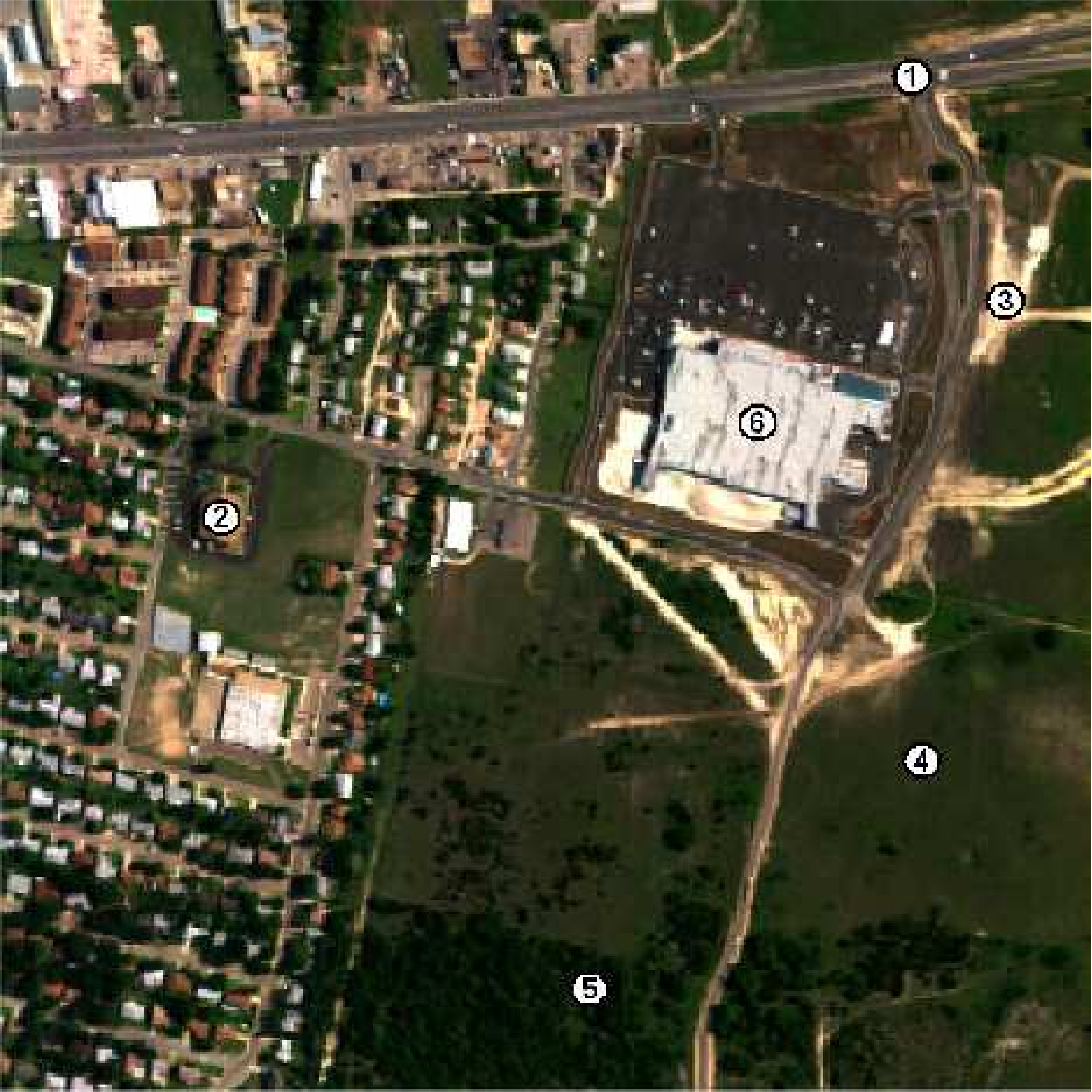} & 
 \includegraphics[height=6cm]{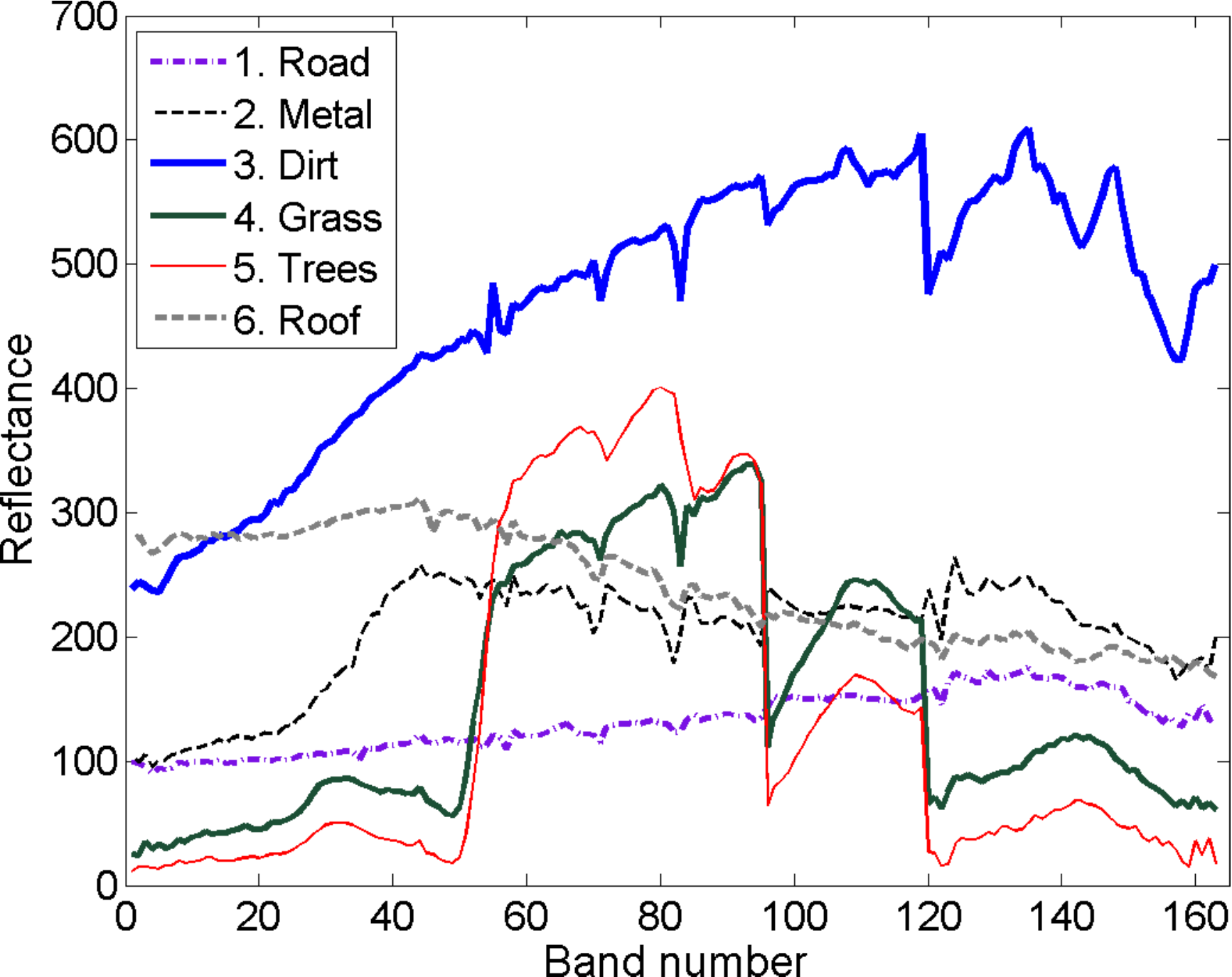} \\
\end{tabular}
\caption{Urban HSI set taken from an aircraft (army geospatial center). The spectral signatures of the six endmembers on the right-hand side were obtained using the N-FINDR5 algorithm~\cite{Win99} plus manual adjustment~\cite{GWO09}.}
\label{urband}
\end{center}
\end{figure} 
Figure~\ref{urbanclus} displays the clusters obtained with H2NMF, HKM and HSPKM\footnote{The clustering obtained with KM and SPKM can be found in~\cite{PGAG11}; the clustering obtained with KM is rather poor while the one obtained with SPKM is similar to the one obtained with HSPKM.}. 
We observe that 
\begin{itemize}

\item HKM performs very poorly. This is due to the illumination which is uneven among the pixels in the image (which is very damaging for HKM as shown on the synthetic data sets in Section~\ref{sd}). 

\item HSPKM properly extracts the trees and roof, but the grass is extracted as three separate clusters, while the road, metal and dirt form a unique cluster. 

\item H2NMF properly extracts the trees, roof and dirt, while the grass is extracted as two separate clusters, and the metal and road form a unique cluster. 

\end{itemize}
The reason why H2NMF separates the grass before separating the road and metal is threefold: (i)~the grass is the largest cluster and actually contains two subclasses with slightly different spectral signatures (as reported in~\cite{GP11}; see also Figure~\ref{ssurban}), (ii)~the metal is a very small cluster, and (iii) the spectral signature of the road and metal are not so different (see Figure~\ref{urband}). Therefore, splitting the cluster containing the road and metal does not reduce the error as much as splitting the cluster containing the grass. It is important to note that our criterion used to choose the cluster to split at each step favors larger clusters as the singular values of a matrix tends to be larger when the matrix contains more columns (see Section~\ref{choice}). 
Although it works well in many situations (in particular, when clusters are relatively well balanced), other criterion might be preferable in some cases; this is a topic for further research.  

Figure~\ref{clushier} displays the first levels of the cluster hierarchy generated by H2NMF. We see that if we were to split the cluster containing the road and metal, they would be properly separated. 
Therefore, we have also implemented an interactive version of H2NMF (denoted I-H2NMF) where, at each step, the cluster to split is visually selected\footnote{This is also available at \url{https://sites.google.com/site/nicolasgillis/}. The user can interactively choose which cluster to split, when to stop the recursion and, if necessary, which clusters to fuse.}.  
Hence, selecting the right clusters to split (namely, splitting the road and metal, and not splitting the grass into two clusters) allows us to identifying all materials separately, see Figure~\ref{i2hnmf}. (Note that this is not possible with HKM and HSPKM.) 
\begin{figure*}[ht!]
\begin{center}
\begin{tabular}{c}
\includegraphics[width=\textwidth]{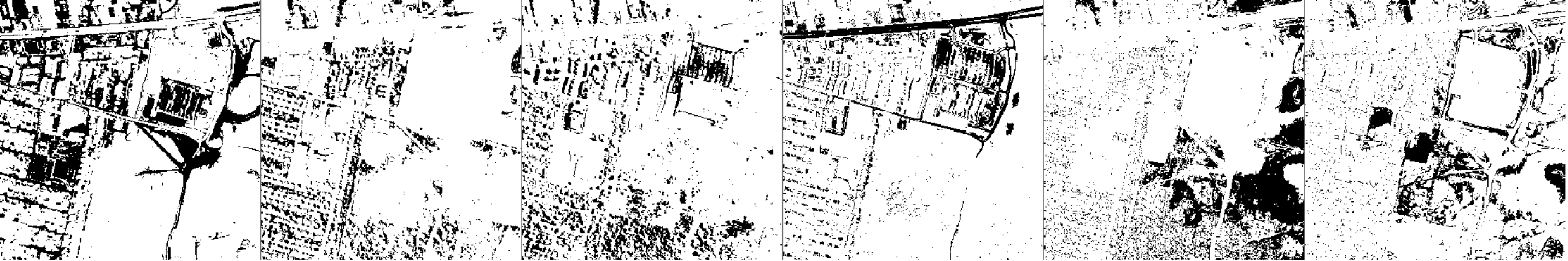} \vspace{0.2cm}\\ 
 \includegraphics[width=\textwidth]{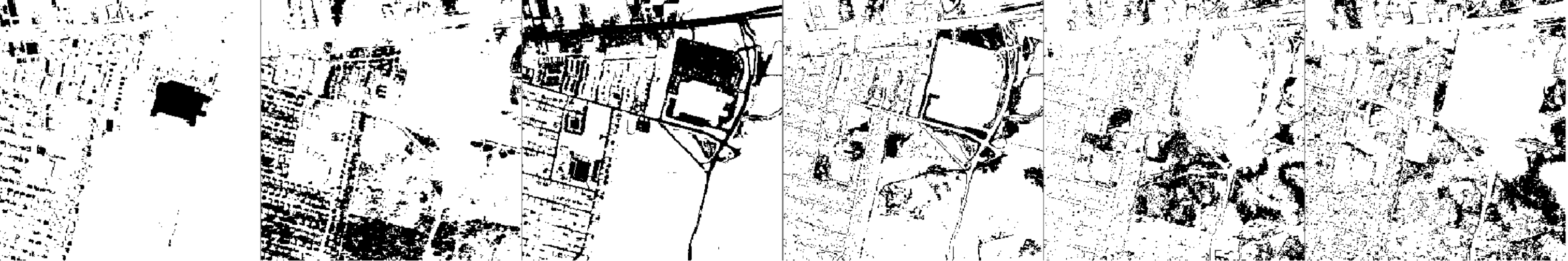} \vspace{0.2cm} \\
  \includegraphics[width=\textwidth]{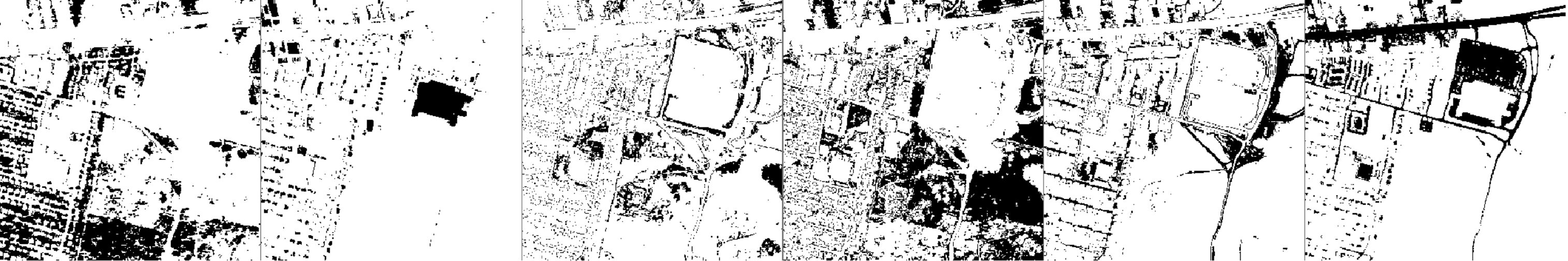} 
\end{tabular}
\caption{Clustering of the Urban HSI. From top to bottom: HKM, HSPKM, and H2NMF.}
\label{urbanclus} 
\end{center}
\end{figure*} 

\begin{figure*}[ht!]
\begin{center}
\includegraphics[width=\textwidth]{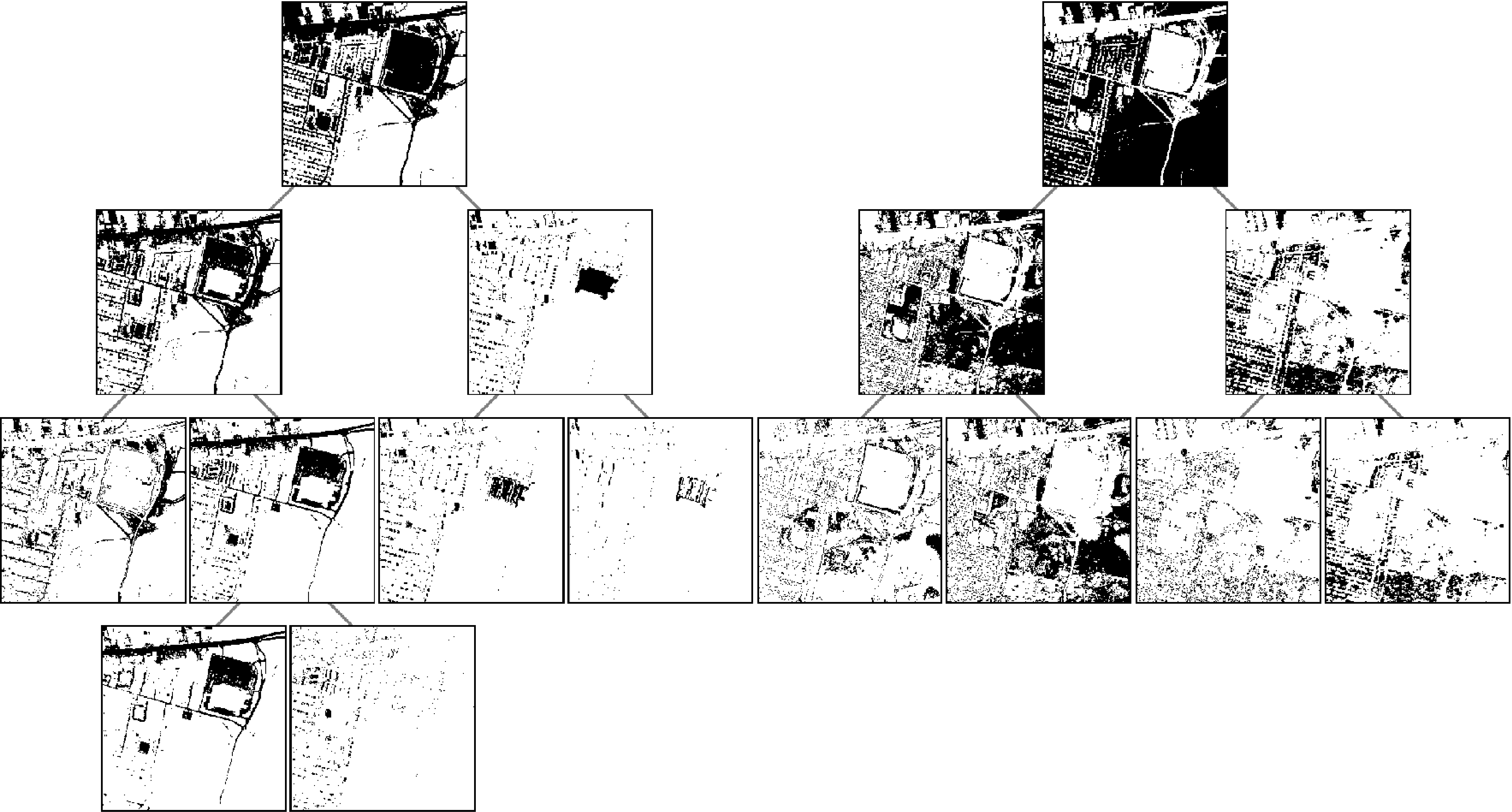}
\caption{Hierarchical structure of H2NMF for the Urban HSI.}
\label{clushier}
\end{center}
\end{figure*}

\begin{figure*}[ht!]
\begin{center}
\includegraphics[width=\textwidth]{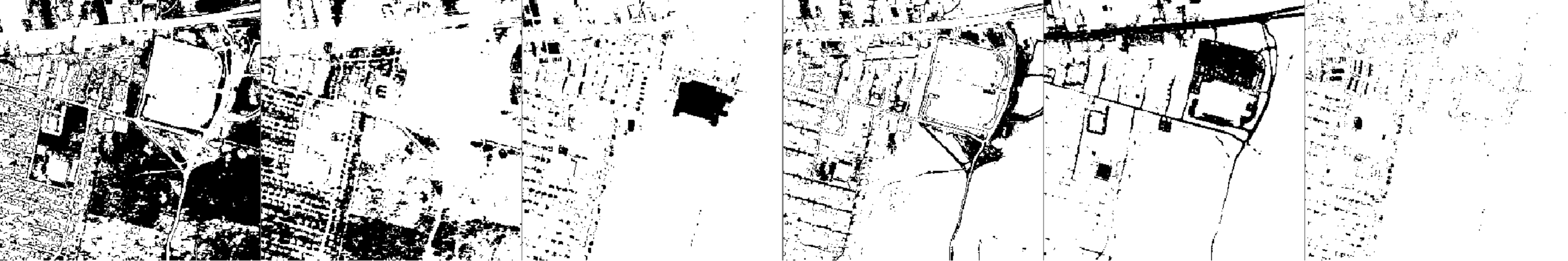}
\caption{Interactive H2NMF (I-H2NMF) of the Urban HSI (see also Figure~\ref{clushier}). From left to right: grass, trees, roof, dirt, road, and metal.}
\label{i2hnmf}
\end{center}
\end{figure*}

Using the strategy described in Section~\ref{eea}, we now compare the different algorithms when they are used for endmember extraction. Figure~\ref{ssurban} displays the spectral signatures of the pixels extracted by the different algorithms. 
\begin{figure*}[ht!]
\begin{center}
\includegraphics[width=\textwidth]{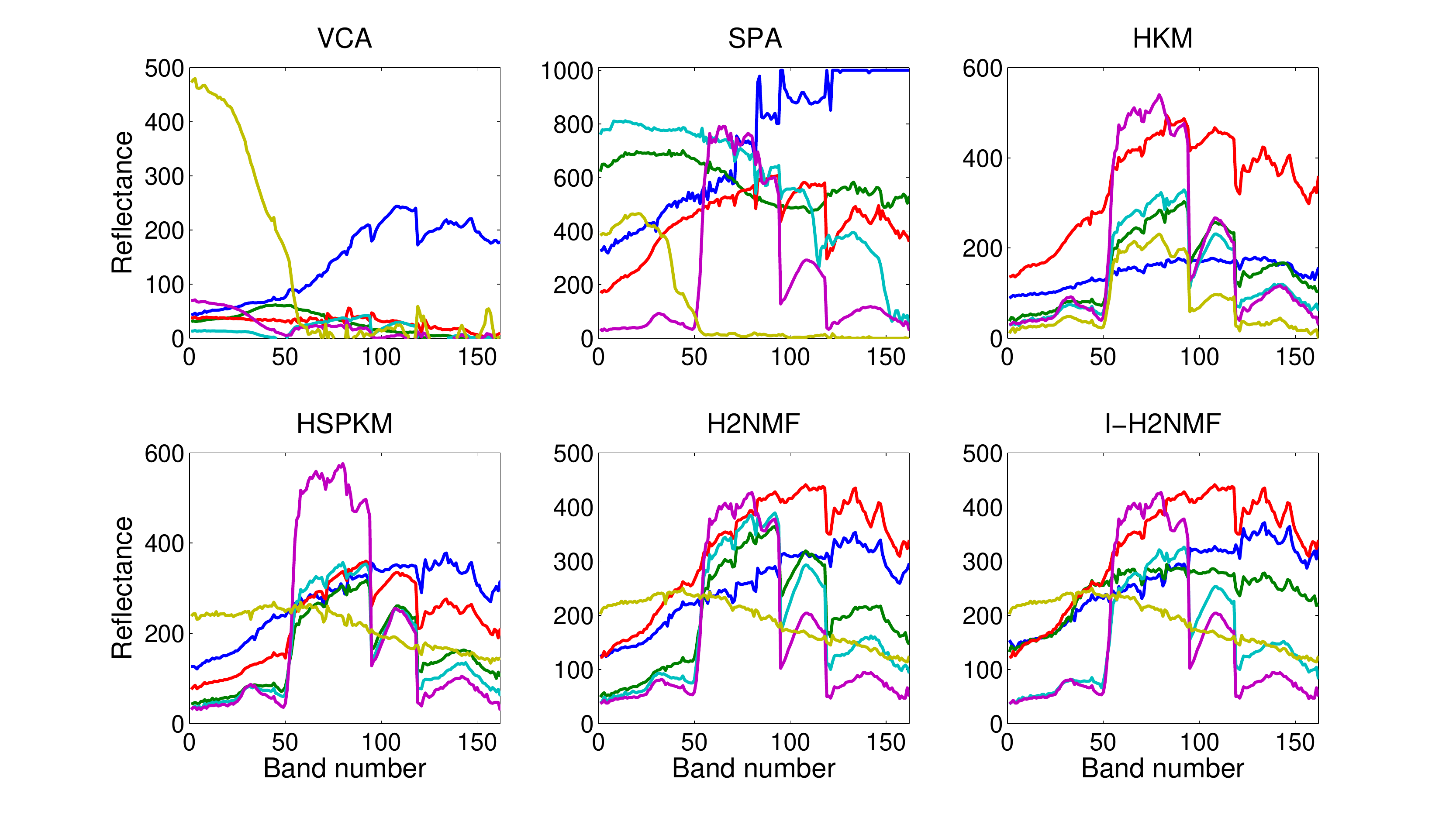}
\caption{Spectral signatures extracted by the different algorithms for the Urban HSI.}
\label{ssurban}
\end{center}
\end{figure*}
Letting $w'_k$ ($1 \leq k \leq r$) be the spectral signatures extracted by an algorithm, we match them with the `true' spectral signatures $w_k$ ($1 \leq k \leq r$) from~\cite{GWO09} so that $\sum_{k=1}^r  \phi(w_k, w'_k)$ is minimized; see Equation~\eqref{mrsa}.   
Table~\ref{timeaccurb} reports the MRSA, along with the running time of all methods. Although the hierarchical clustering methods are computationally more expensive, they perform much better than both VCA and SPA. 
\begin{table}[ht!]
\begin{center}
\begin{tabular}{|c|c|c|c|c|c|c|}
\hline
 & VCA & SPA  &  HKM &  \hspace{-0.1cm} HSPKM   \hspace{-0.1cm} & H2NMF &  \hspace{-0.2cm}  I-H2NMF  \hspace{-0.2cm}     \\ \hline 
Time (s.) & 3.62  &   1.10 &  113.79 &  47.30  & 41.00 &  43.18 \\ \hline  \hline   
Road & 13.09&11.54&14.97&11.54& 7.62& \textbf{7.27} \\
Metal & 51.53&62.31&30.72&31.42&28.61& \textbf{12.74} \\
Dirt & 60.81&17.35&10.97&13.56& \textbf{5.08}& \textbf{5.08}\\
Grass &16.65&47.32& \textbf{2.46}& 2.87& 3.39& 5.36 \\
Trees &53.38& 4.21& 2.16& 1.86& \textbf{1.63}& \textbf{1.63} \\
Roof &26.44&27.40&45.68& 8.84& \textbf{7.30} & \textbf{7.30} \\ \hline  
Average & 36.98&28.36&17.82&11.68& 8.94& \textbf{6.56} \\  \hline   
\end{tabular}
\caption{Running times and MRSA (in percent) for the Urban HSI.} 
\label{timeaccurb}
\end{center}
\end{table}


\subsubsection{San Diego Airport}
 

The San Diego airport is a HYDICE HSI containing 158 clean bands, and $400 \times 400$ pixels for each spectral image (i.e., $M \in \mathbb{R}^{158  \times 160000}_+$), see Figure~\ref{sandiego}. 
\begin{figure}[ht]
\begin{center}
\begin{tabular}{cc}
\includegraphics[height=6cm]{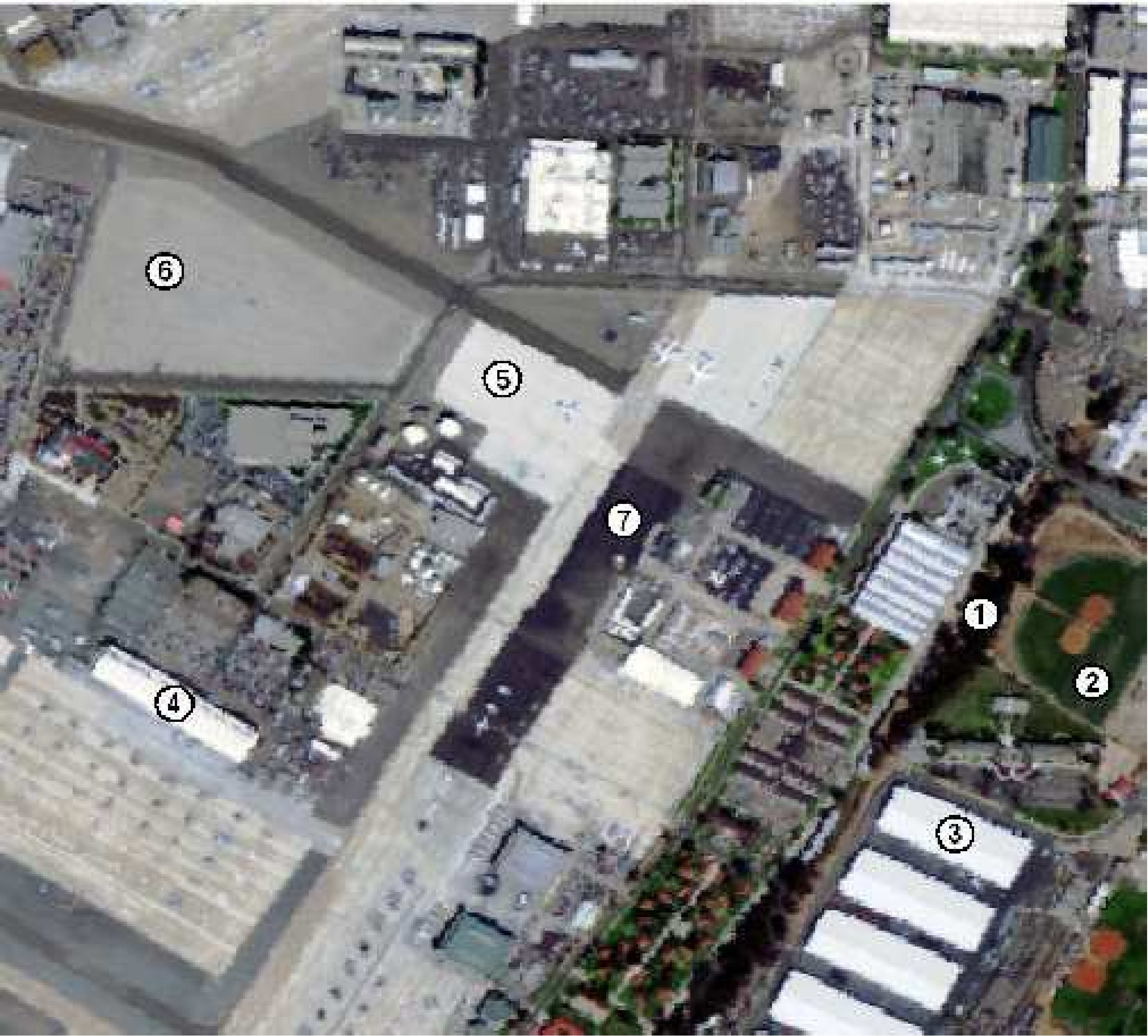} \vspace{0.1cm} & 
 \includegraphics[height=6cm]{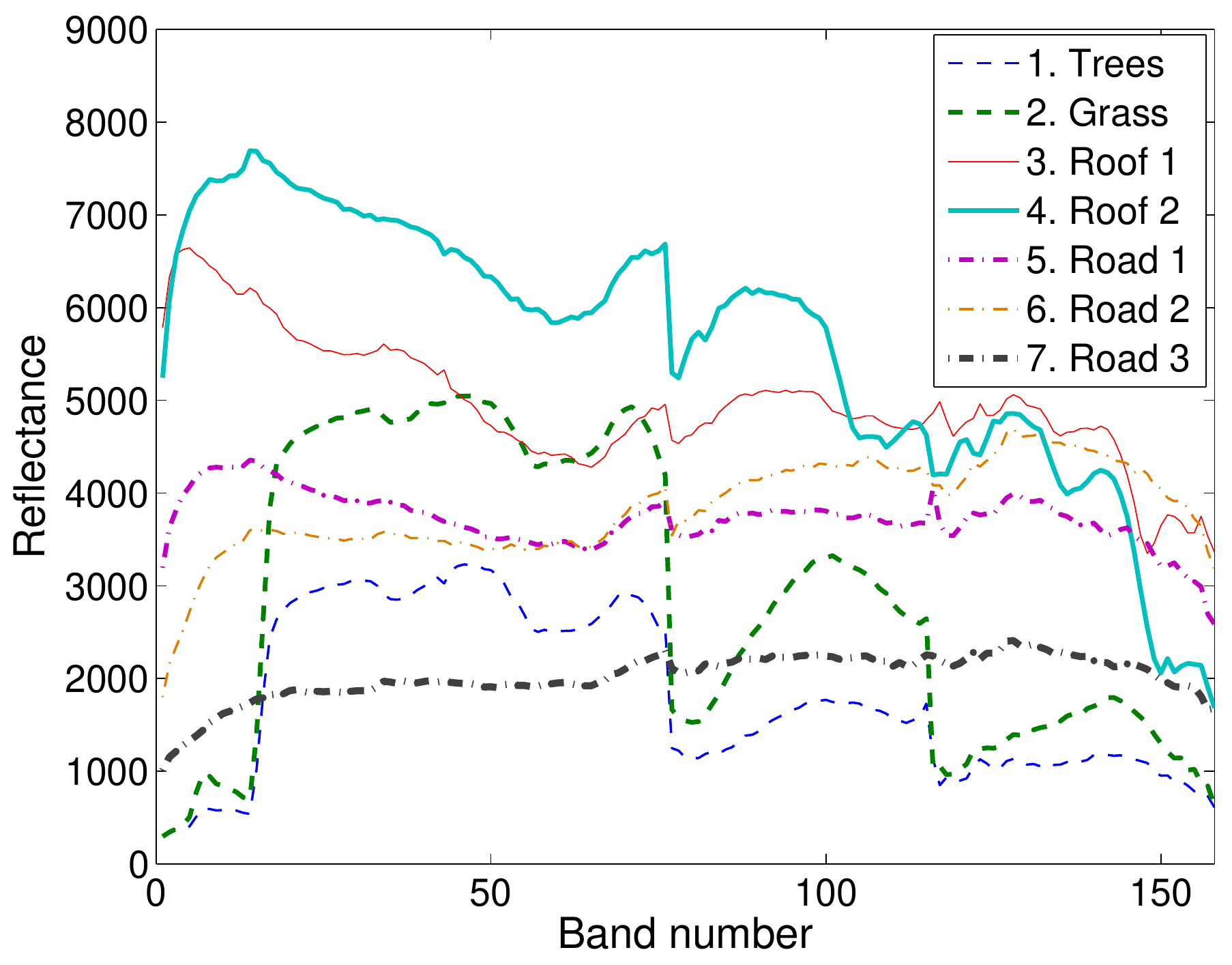} 
\end{tabular}
\caption{San Diego airport HSI (left) and manually selected spectral signatures (right).}
\label{sandiego}
\end{center}
\end{figure}
There are mainly four types of materials: road surfaces, roof, trees and grass; see, e.g.,~\cite{GP13}.  
There are three types of road surfaces including boarding and landing zones, parking lots and streets, and two types of roof tops\footnote{Note that in~\cite{GP13}, only one type of roof top is identified.}. 
In this section, we perform exactly the same experiment as in the previous section for the Urban HSI. The spectral signatures of the endmembers are shown on Figure~\ref{sandiego} and have been extracted manually using the HYPERACTIVE toolkit~\cite{FH07}.

Figure~\ref{sdclus} displays the clusters obtained with H2NMF, HKM and HSPKM. We observe that 
\begin{itemize}

\item HKM performs rather poorly. It cannot identify any roof tops, and four clusters only contain the vegetation. The reason is that the spectral signatures of the pixels containing grass and trees differ by scaling factors. 

\item HSPKM properly extracts the grass, two of the three road surfaces and the roof tops (although roof 2 is mixed with some road surfaces).  

\item H2NMF extracts all materials very effectively, except for the trees. 

\end{itemize}
The reason why H2NMF and HSPKM do not identify the trees properly is because the spectral signature of the trees and grass are almost the same: they only differ by a scaling factor (see Figure~\ref{sandiego}). Hence, for this data set, it is more effective to split the cluster containing the vegetation with $k$-means (which is the only one able to identify the trees, see the sixth abundance map of the first row of Figure~\ref{sdclus}). 
It would therefore be interesting to combine different methods for the splitting strategy; this is a topic for further research. \\

\begin{figure*}[ht!]
\begin{center}
\begin{tabular}{c}
\includegraphics[width=\textwidth]{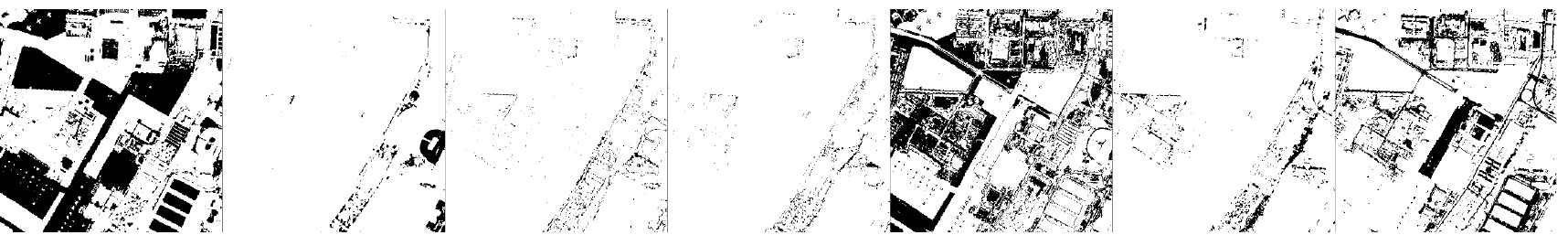}
 \vspace{0.2cm}\\ 
 \includegraphics[width=\textwidth]{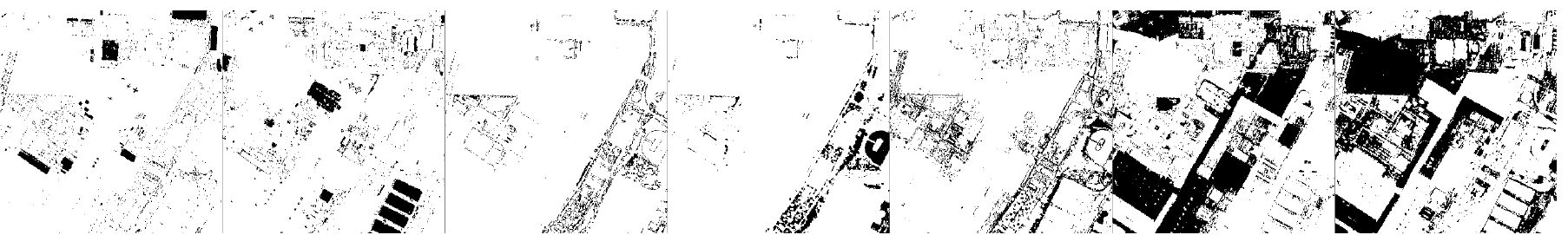} 
\vspace{0.2cm} \\
  \includegraphics[width=\textwidth]{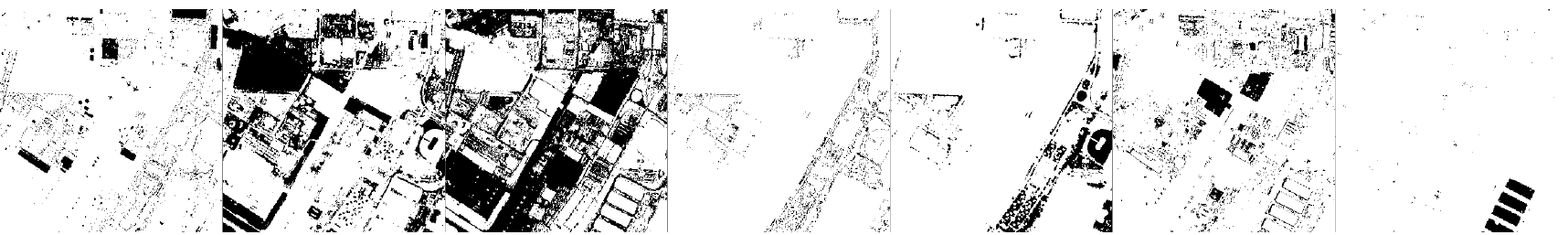} 
\end{tabular}
\caption{Clustering of the San Diego airport HSI. From top to bottom: HKM, HSPKM, and H2NMF.}
\label{sdclus} 
\end{center}
\end{figure*}

Figure~\ref{sssd} displays the spectral signatures of the pixels extracted by the different algorithms. 
\begin{figure*}[ht!]
\begin{center}
\includegraphics[width=16cm]{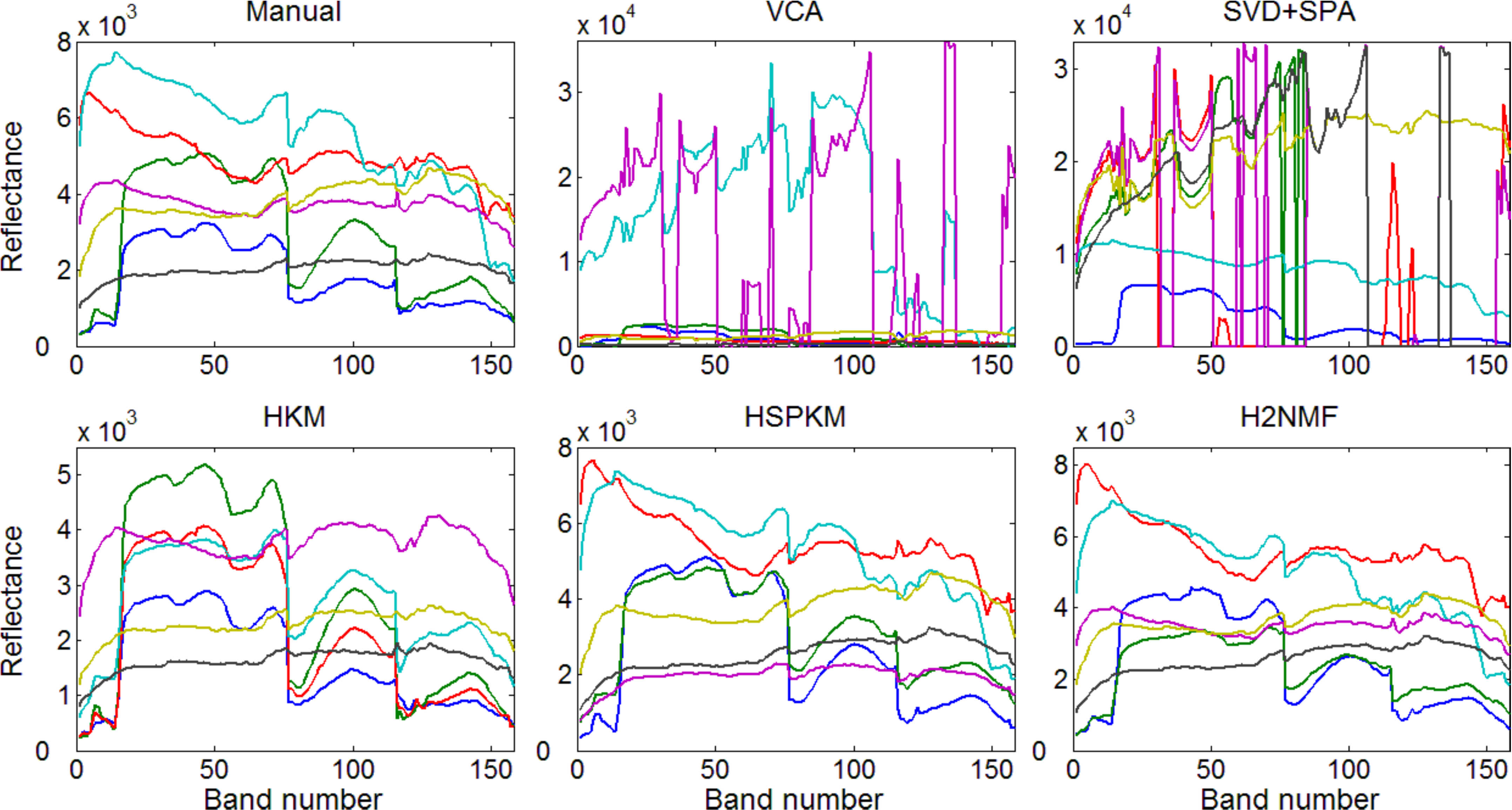}
\caption{Spectral signatures extracted by the different algorithms for the San Diego airport HSI.}
\label{sssd}
\end{center}
\end{figure*}
Table~\ref{timeaccsd} reports the running time of all methods, and the MRSA where the ground `truth' are the manually selected spectral signatures. 
The hierarchical clustering methods perform much better than VCA and SPA. In fact, as it can be observed on Figure~\ref{sssd}, VCA and SPA are very sensitive to outliers and both extract four of them (the San Diego airport HSI contains several outliers with very large spectral signatures). 
\begin{table}[ht!]
\begin{center}
\begin{tabular}{|c|c|c|c|c|c|}
\hline
 & VCA & SVD+SPA  &  HKM & HSPKM & H2NMF        \\ \hline 
Running time (s.) & 5.13  &   1.95 &  145.75 &  98.09  & 68.91  \\ \hline  \hline   
Trees &21.52&9.20&\textbf{2.97}&3.57&{3.42} \\
Grass &8.18&32.36&\textbf{2.37}&2.79&7.09\\
Roof 1 &16.06&38.30&47.43&\textbf{3.55}&{4.29}\\
Roof 2 &27.36&3.01&35.21& \textbf{1.64}&2.45\\
Road 1 &41.11&42.78&29.91&51.19&\textbf{9.58}\\
Road 2 &19.28&21.13&13.84&5.24&\textbf{3.77}\\
Road 3 &46.15&48.76&\textbf{5.32}&9.25&7.85\\ \hline   
Average &25.67&27.93&19.58& 11.03&\textbf{5.49} \\ \hline    
\end{tabular}
\caption{Running times and MRSA (in percent) for the San Diego airport HSI.}
\label{timeaccsd}
\end{center}
\end{table} 

Figure~\ref{abmapallsd} displays the abundance maps\footnote{We used the solution of $\min_{H \geq 0} ||M-WH||_F^2$ obtained with the NNLS solver from~\cite{KP11}.} corresponding to the spectral signatures displayed in Figure~\ref{sssd}. 
\begin{figure*}[h!]
\begin{center}
\includegraphics[width=\textwidth]{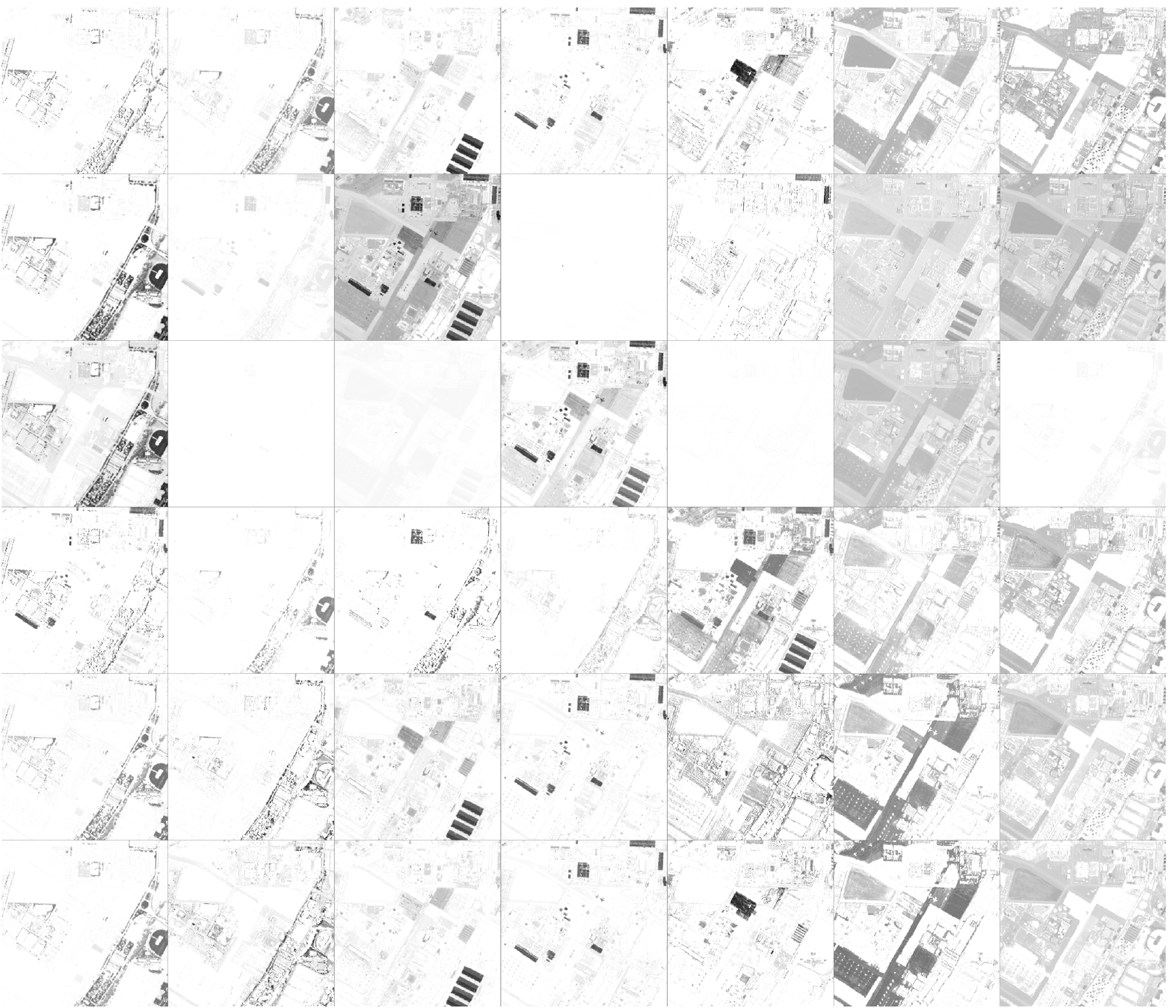} 
\caption{Comparison of the abundance maps obtained using the spectral signatures displayed in Figure~\ref{sssd}. 
From left to right: trees/grass, roof 1, roof 2, road 1, road 2, road 3. 
From top to bottom: Manual, VCA, SVD+SPA, HKM, HSPKM, H2NMF.} 
\label{abmapallsd}
\end{center}
\end{figure*} 
The abundance maps identified by H2NMF are the best: in fact, HSPKM does not identify road 1 which is mixed with roof 1. The reason is that the spectral signature extracted by HSPKM for road 1 is rather poor (see Table~\ref{timeaccsd}). Also, H2NMF actually identified better spectral signatures for roof 1 and road 1 than the manually selected ones for which the corresponding abundance maps on the first row of Figure~\ref{abmapallsd} contains more mixture. \\


Figure~\ref{clushiersd} displays the first levels of the cluster hierarchy of H2NMF. It is interesting to notice that road 2 and 3 can be further split up into two meaningful subclasses.  Moreover, another new material is identified (unknown to us prior to this study), it is some kind of roofing material/dirt (note that HKM and HSPKM are not able to identify this material). 
\begin{figure*}[h!]
\begin{center}
\includegraphics[width=\textwidth]{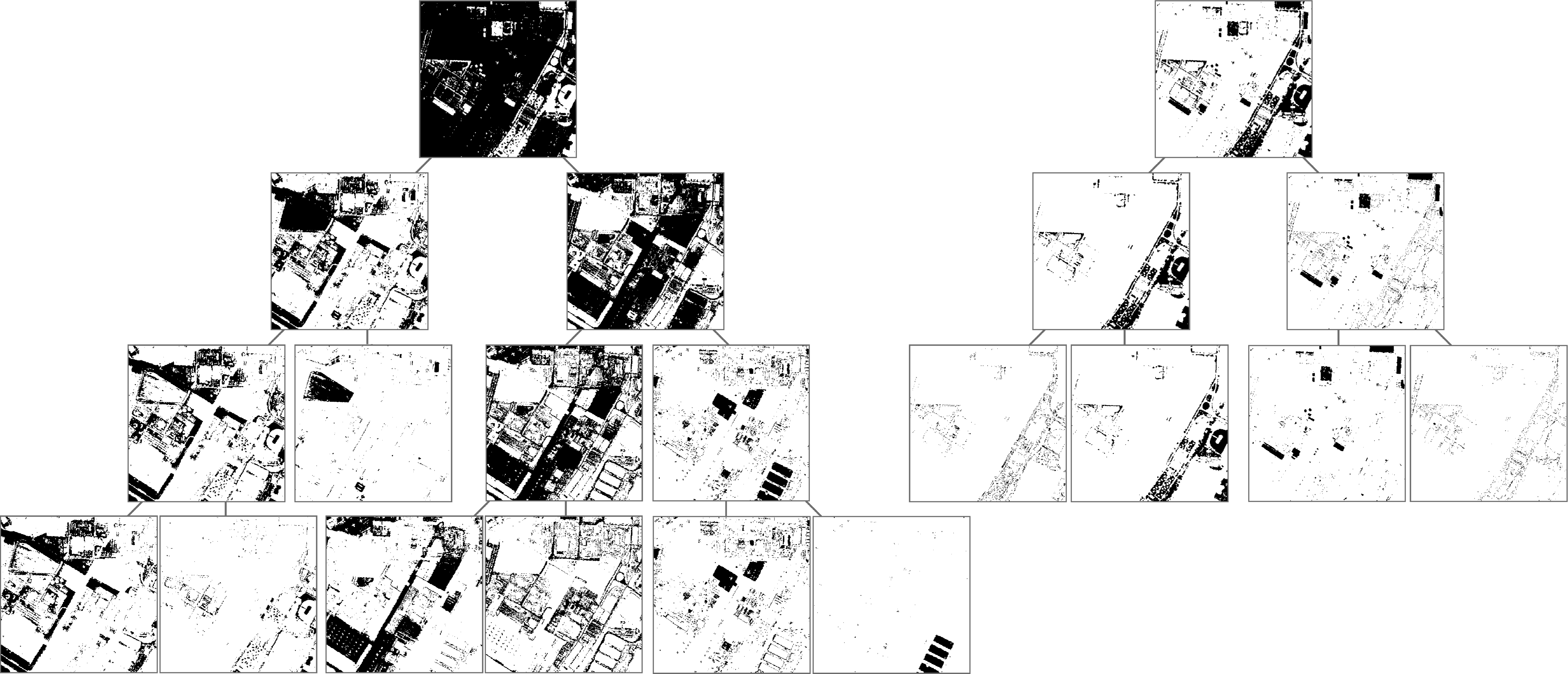}
\caption{Hierarchical structure of H2NMF for the San Diego airport HSI.}
\label{clushiersd}
\end{center}
\end{figure*}
Figure~\ref{abmapih2nmf} displays the clusters obtained using I-H2NMF, that is, manually splitting and fusing the clusters (more precisely, after having identified the new endmember by splitting road~2, we refuse the two clusters corresponding to road~2); see also Figure~\ref{clushiersd}. 
\begin{figure*}[h!]
\begin{center}
\includegraphics[width=\textwidth]{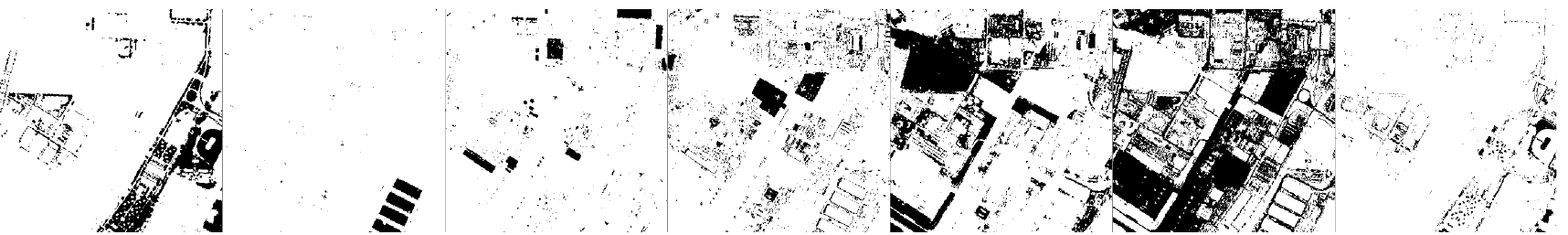} 
\caption{Clustering of San Diego airport HSI with I-H2NMF. 
From left to right, top to bottom: vegetation (grass and trees), roof~1, roof~2, road~1, road~2, road~3, dirt/roofing.} 
\label{abmapih2nmf}
\end{center}
\end{figure*}

\subsection{Additional experiments on real-world HSI's} 

In this section, our goal is not to compare the different clustering strategies (due to the space limitation) but rather show that H2NMF can give good results for other real-world and widely used data sets; in particular the Cuprite data set which is rather complicated with many endmembers and highly mixed pixels. 
We also take this opportunity to show that our Matlab code is rather easy to use and fast:

\subsubsection{Terrain HSI} 

The Terrain hyperspectral image is available from \url{ttp://www.agc.army.mil/Missions/Hypercube.aspx}. It is constituted of 166 cleans bands, each having $500 \times 307$ pixels, and is composed of about 5 different materials: road, tree, bare soil, thin and tick grass; see, e.g., \url{http://www.way2c.com/rs2.php}. The Matlab code to run H2NMF is the following:   
\small 
\begin{verbatim}
>> load Terrain; % Load HSI as matrix x
>> tic; [IDX, C] = hierclust2nmf(x,5); toc
Hierarchical clustering started... 
1...2...3...4...Done. 
Elapsed time is 20.261847 seconds. 
>> affclust(IDX,500,307,5); % Display the clusters; see Figure 16
>> figure; plot(C);  % Display endmembers; see Figure 17
>> H = nnlsm_blockpivot(C,x); % Compute abundance maps
>> affichage(H',5,500,307); % Display abundance maps; see Figure 18 
\end{verbatim}  
\normalsize 
\begin{figure}[ht!]
\begin{center}
\includegraphics[width=\textwidth]{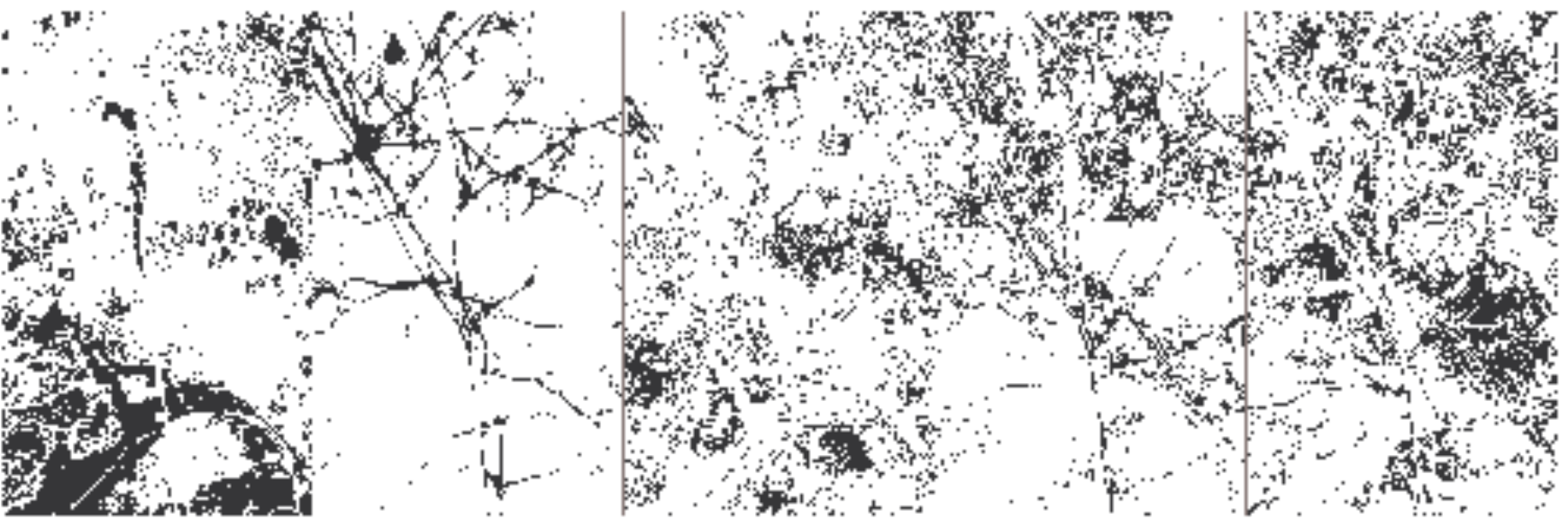}
\caption{Five clusters obtained automatically with H2NMF on the Terrain HSI. From left to right: tree, road, thick grass, bare soil and thin grass.} 
\label{terclus}
\end{center}
\end{figure}
\begin{figure}[ht!]
\begin{center}
\includegraphics[width=8cm]{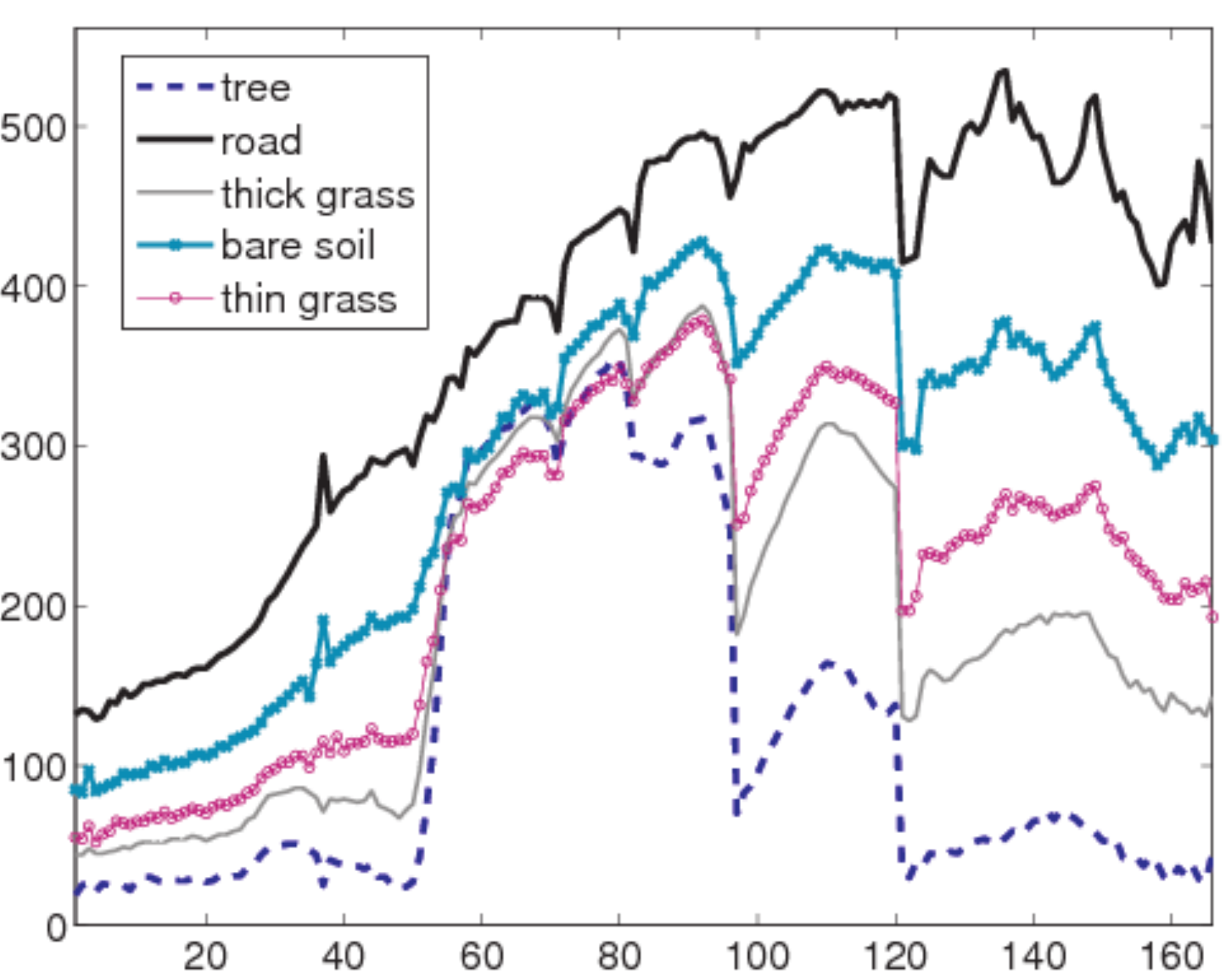}
\caption{Five endmembers obtained with H2NMF on the Terrain HSI.}
\label{terend}
\end{center}
\end{figure}
\begin{figure}[ht!]
\begin{center}
\includegraphics[width=\textwidth]{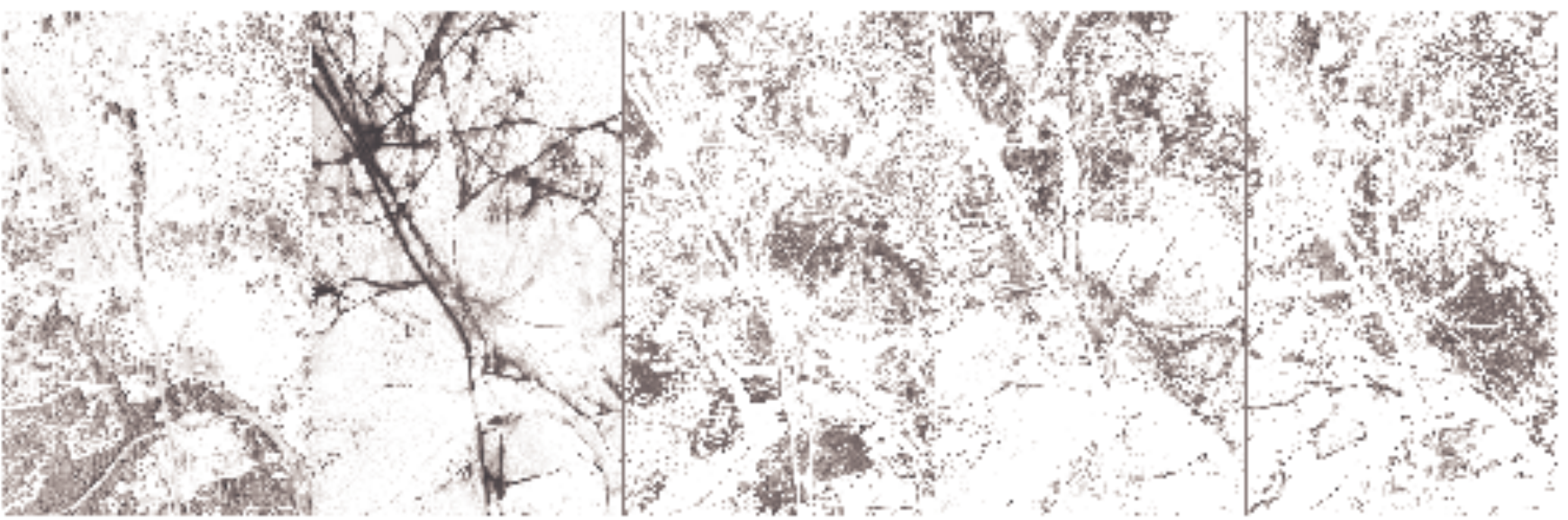}
\caption{Five abundance maps corresponding to the endmembers extracted with H2NMF.} 
\label{terab}
\end{center}
\end{figure} 
H2NMF is able to identify the five clusters extremely well, while HKM and HSPKM are not able to separate bare soil, thick and thin grass properly.

\subsubsection{Cuprite HSI} \label{cuprite} 

Cuprite is a mining area in southern Nevada with mostly mineral and very little vegetation, located approximately 200km northwest of Las Vegas, 
see, e.g.,~\cite{ND05, cup11} for more information and \url{http://speclab.cr.usgs.gov/PAPERS.imspec.evol/aviris.evolution.html}. 
It consists of 188 images, each having $250 \times 191$ pixels, and is composed of about 20 different minerals. 
The Cuprite HSI is rather noisy and many pixels are mixture of several endmembers.  
Hence this experiment illustrates the usefulness of H2NMF to analyze more difficult data sets, where the assumption that most pixels are dominated mostly by one endmember is only roughly satisfied; see Figure~\ref{cupH2NMF}. We run H2NMF with $r=15$: 
\small  
\begin{verbatim}
>> load cuprite_ref; %From www.lx.it.pt/~bioucas.  
>> tic; [IDX, C] = hierclust2nmf(x,15); toc
Hierarchical clustering started... 
1...2...3...4...5...6...7...8...9...10...
11...12...13...14...Done.
Elapsed time is 11.632038 seconds.
>> affclust(IDX,250,191,5); %See Figure 19 displaying the 15 clusters.
\end{verbatim}  
\normalsize 
\begin{figure}[ht!]
\begin{center}
\includegraphics[width=\textwidth]{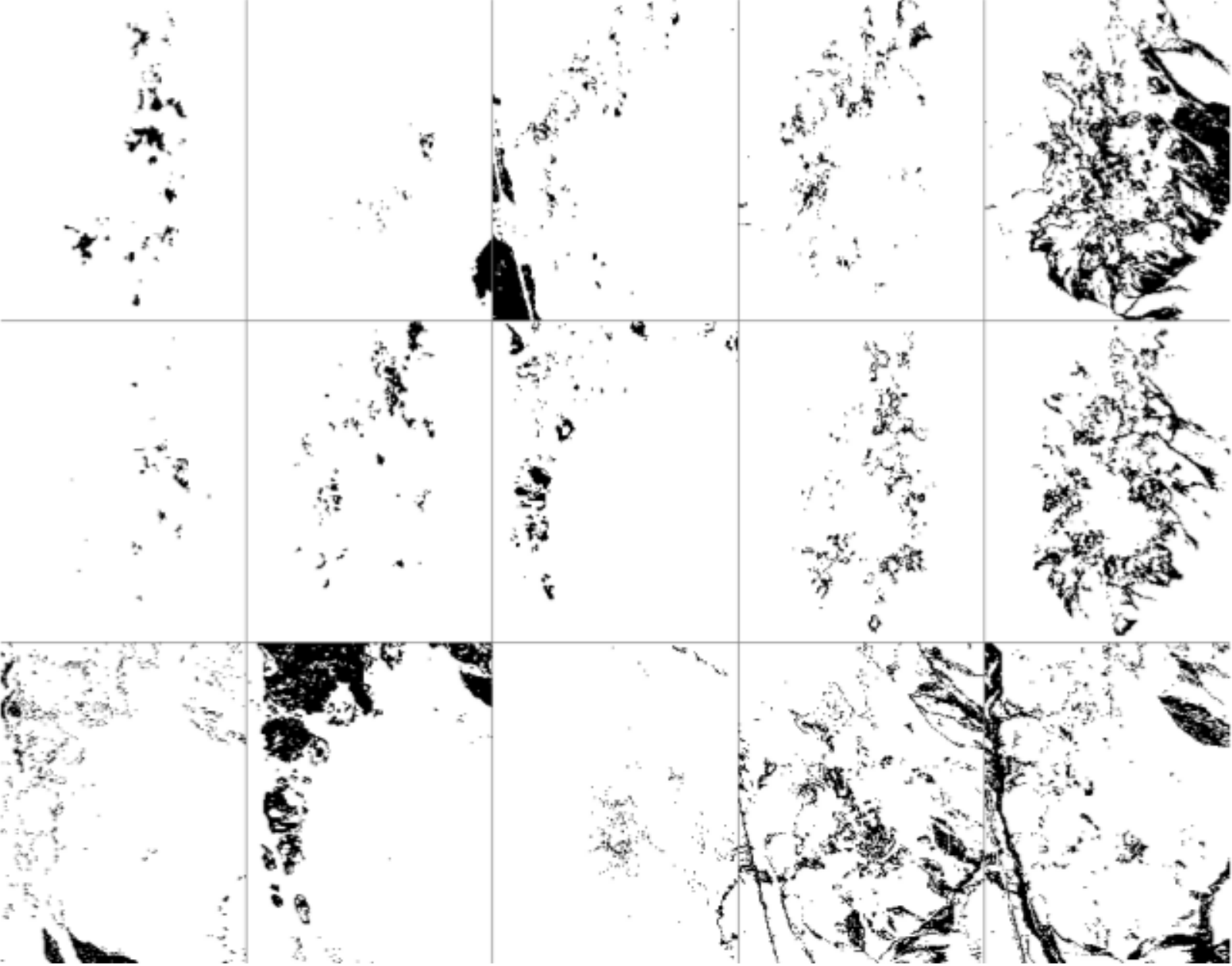}
\caption{Fifteen clusters obtained automatically with H2NMF on the Cuprite HSI. Some materials can be distinguished, e.g., (1)~Alunite, 
(2)~Montmorillonite, (3)~Goethite, (5)~Hematite, (8)-(12)~Desert Varnish, (11)~Iron oxydes, and (15)~Kaolinite (counting from left to right, top to botttom).}
\label{cupH2NMF}
\end{center}
\end{figure}

\section{Conclusion and Further Work}

In this paper, we have introduced a way to perform hierarchical clustering of high-resolution HSI's using the geometry of such images and the properties of rank-two NMF; see Algorithm 1 (referred to as H2NMF). 
We showed that the proposed method outperforms $k$-means, spherical $k$-means and standard NMF on several synthetic and real-world data sets, being more robust to noise and outliers, while being computationally very efficient, requiring $\mathcal{O}(mnr)$ operations 
($m$ is the number of spectral bands, $n$ the number of pixels and $r$ the number of clusters). 
Although high resolution HSI's usually have low noise levels, one of the reason H2NMF performs well is that it can handle better background pixels and outliers. There might also be some materials present in very small proportion that are usually modeled as noise~\cite{Jose12} 
(hence robustness to noise is a desirable property even for high resolution HSI's). Moreover, we also showed how to use H2NMF to identify pure pixels which outperforms standard endmember extraction algorithms such as VCA and SPA.

It would be particularly interesting to use other priors of HSI's to perform the clustering. In particular, using the spatial information (that is, the fact that neighboring pixels are more likely to contain the same materials) could certainly improve the clustering accuracy.  Also, the same technique could be applied to other kinds of data (e.g., in medical imaging, or document classification).

	 \section*{Acknowledgments}
 
The authors would like to thank the editor and the reviewers for their insightful comments which helped improve the paper. \vspace{0.2cm}

\bibliographystyle{spmpsci}  
\bibliography{rank2nmf}

\newpage

\end{document}